\newcommand{\eps}{\epsilon}
\newcommand{\AboveThresh}{{\textsc{AboveThresh}}}
\newcommand{\GenAboveThresh}{{\textsc{GenAboveThresh}}}
\newcommand{\PrivSprt}{{\textsc{PrivSPRT}}}
\definecolor{DarkGreen}{rgb}{0.1,0.5,0.1}
\newcommand{\new}[1]{\textcolor{black}{#1}}
\newcommand{\newtext}[1]{\textcolor{black}{#1}}
\newtheorem{theorem}{Theorem}
\newtheorem{lemma}{Lemma}
\newtheorem{corollary}[lemma]{Corollary}
\newtheorem{definition}{Definition}
\begin{document}

\title{Private Sequential Hypothesis Testing for Statisticians:\\ Privacy, Error Rates, and Sample Size}

\date{}
%

\author{Wanrong Zhang\footnotemark[1] \and Yajun Mei\footnotemark[2] \and Rachel Cummings\footnotemark[3]}

\renewcommand{\thefootnote}{\fnsymbol{footnote}}
\footnotetext[1]{Harvard John A. Paulson School Of Engineering And Applied Sciences {\tt wanrongzhang@fas.harvard.edu}. Supported by a Computing Innovation Fellowship from the Computing Research Association (CRA) and the Computing Community Consortium (CCC). Part of this work was completed while W.Z. was at Georgia Institute of Technology.}
\footnotetext[2]{H. Milton Stewart School of Industrial and Systems Engineering, Georgia Institute of Technology. {\tt yajun.mei@isye.gatech.edu}. Supported in part by NSF-DMS grant 2015405. }
\footnotetext[3]{Department of Industrial Engineering and Operations Research, Columbia University. {\tt rac2239@columbia.edu}. Supported in part by  NSF grants CNS-1850187 and CNS-1942772 (CAREER), and a JPMorgan Chase Faculty Research Award. Part of this work was completed while R.C. was at Georgia Institute of Technology.}

\maketitle

\renewcommand{\thefootnote}{\arabic{footnote}}

\begin{abstract}
 	The {\em sequential hypothesis testing} problem is a class of statistical analyses where the sample size is not fixed in advance. Instead, the decision-process takes in new observations sequentially to make real-time decisions for testing an alternative hypothesis against a null hypothesis until some stopping criterion is satisfied. In many common applications of sequential hypothesis testing, the data can be highly sensitive and may require privacy protection; for example, sequential hypothesis testing is used in clinical trials, where doctors sequentially collect data from patients and must determine when to stop recruiting patients and whether the treatment is effective. The field of {\em differential privacy} has been developed to offer data analysis tools with strong privacy guarantees, and has been commonly applied to machine learning and statistical tasks.
 
 In this work, we study the sequential hypothesis testing problem under a slight variant of differential privacy, known as \emph{Renyi differential privacy}.  We present a new private algorithm based on Wald's Sequential Probability Ratio Test (SPRT) that also gives strong theoretical privacy guarantees. We provide theoretical analysis on statistical performance measured by Type I and Type II error as well as the expected sample size. We also empirically validate our theoretical results on several synthetic databases, showing that our algorithms also perform well in practice. Unlike previous work in private hypothesis testing that focused only on the classical fixed sample setting, our results in the sequential setting allow a conclusion to be reached much earlier, and thus saving the cost of collecting additional samples.

\end{abstract}


\section{Introduction}


Hypothesis testing is a fundamental task in statistics and machine learning, and involves testing a null hypothesis $H_0$ against an alternative hypothesis $H_1$, given observed data. For the usual statistical hypothesis tests, the sample size is fixed before the data are collected, but for a sequential test we observe \emph{streaming data}, where the total sample size depends on the data and is thus a random variable. Sequential hypothesis testing is valuable because it may enable a decision to be reached earlier than with a fixed sample size test, which is critical when waiting for additional samples is costly.

The most prominent algorithm for sequential hypothesis testing is the Sequential Probability Ratio Test (SPRT) initially developed by \cite{wald1945sequential} for efficient testing of anti-aircraft gunnery during World War II, and later used in the design of fully sequential clinical trials \cite{Arm50, Arm54}. This algorithm continuously monitors the log-likelihood ratio of the observed data under the alternative and under the null hypotheses, and halts as soon as this ratio takes a value that is either very large or very small, reflecting that one hypothesis is overwhelmingly more likely than the other, given the observed data. The analyst running SPRT can choose these thresholds to trade-off her desired confidence in her final decision with making decisions quickly (with respect to the number of samples). In modern day, SPRT and other techniques for sequential hypothesis testing are widely used for many real-world applications, including clinical trials and quality control~\cite{wald2004sequential, siegmund2013sequential, Whi97, ghosh1991handbook}.


Performance of a sequential testing procedure is evaluated using four main criteria: two operating characteristic (OC) functions to describe the accuracy of final decisions, and two average sample number (ASN) functions to describe how quickly a decision was reached. The two OC criteria are the probability of Type I error, $\Pr[\mbox{reject $H_0$} \; | H_0],$ and the probability of Type II error, $\Pr[\mbox{accept $H_0$}\; | H_1]$. Since the number of observations $T$ is a random variable, the two ASN functions are the expected sample size under the null and alternative hypotheses, $\mathbb{E}_{H_0}[T]$ and $\mathbb{E}_{H_1}[T]$, respectively. \cite{WW48} showed that the sequential probability ratio test (SPRT) is the optimal test of testing a simple null $H_0$ against a simple alternative $H_1$ when observations are assumed to be sampled i.i.d., where optimality is defined as simultaneously minimizing both $\mathbb{E}_{H_0}[T]$ and $\mathbb{E}_{H_1}[T]$ subject to constraints on Type I and Type II error probabilities.


In modern applications of sequential hypothesis testing --- for example to medical clinical trials --- privacy also becomes another crucial performance criterion, as the data and decisions can be highly sensitive. The field of {\em differential privacy} \cite{DMNS06} has emerged as the gold standard in private data analysis by providing algorithms with strong worst-case privacy guarantees.  It is a parameterized privacy notion, where the privacy parameter $\eps$ allows for a smooth tradeoff between accuracy of the analysis and privacy to the individuals in the database. Informally, an algorithm is $\eps$-differentially private if it ensures that any particular output of the algorithm is at most $e^\eps$ more likely when a single user's data are changed. In recent years, tools for differentially private data analysis have been deployed in practice by major organizations such as Google~\cite{ErlingssonPK14}, Apple~\cite{AppleDP17}, Microsoft~\cite{DingKY17}, and the U.S. Census Bureau~\cite{DajaniLSKRMGDGKKLSSVA17}. 

In this work, we provide the first differentially private algorithm for the sequential hypothesis testing problem with theoretical guarantees on the Type I and Type II error, and the expected sample size.
By focusing on the metrics most relevant to the field of statistics and its practitioners, our work may be more readily deployed in practice.
One real-world application of our results is the design of statistically valid sequential experiments and clinical trials before data are collected or observed. Typically when designing sequential experiments, a scientist must develop and pre-register a well-justified protocol for making final decisions under all possible data outcomes, and no further adjustments to the protocol can be made once data collection has begun. Fully sequential design of clinical trials, as suggested by \cite{Arm50, Arm54}, where evaluation occurs after each new patient outcome was not always possible for statistical or practical reasons -- e.g., it is difficult to convene a data and safety monitoring committee after each observation. With recent advancements in statistics and computing, it has become feasible to continuously monitor and evaluate every patient \cite{Whi97}. Modern examples of fully sequential trials include the ``MADIT'' clinical trial to evaluate the effect of an implanted defibrillator \cite{Dem98} and a COVID-19 therapeutics trial intended to speed up the decision process \cite{Har20}. Fully sequential trials risk leaking patient's sensitive information, especially for patients with data collected shortly before the trial is halted. Our proposed private sequential test can be used for monitoring trials where privacy protection is necessary, such as those with irreversible clinical outcomes like death or severe infectious disease. It can also balance the tradeoff between small expected sample sizes for rapid decision, controlled Type I and Type II error properties, and formal privacy protections.

\subsection{Our contribution}

In this work, we combine tools from differential privacy with classical statistical methods for sequential hypothesis testing to develop a private version of Wald's SPRT, which we call \PrivSprt.

The most natural existing tool for privatizing Wald's SPRT is a private subroutine called \AboveThresh\ \cite{DNRRV09, dwork2014algorithmic} (also known as \textsc{SparseVector}). This algorithm takes in a database $X$ and a stream of queries $q_1,q_2,\ldots$, and sequentially privately tests whether the numerical value of each query $q_i$ evaluated on the database $q_i(X)$ is above or below a pre-specified threshold. A natural first attempt at a private version of SPRT would be to instantiate \AboveThresh\ using the SPRT test statistic as the query and using the SPRT stopping criteria as the threshold (see Section \ref{s.sprtprelim} for more details). However, as we show in Section \ref{sec:exp}, the random noise internal to \AboveThresh\ that is used to guarantee privacy causes extremely poor performance in terms of the relevant OC and ASN metrics.  In particular, we note that while \AboveThresh\ was designed to provide good performance with respect to high-probability finite-sample performance guarantees that are commonly used in the computer science literature, it fails to provide good performance on the metrics that are most relevant to the statistics community, such as Type I and Type II error.


We instead build our algorithm \PrivSprt\ using a generalized version of \AboveThresh\ from \cite{zhu2020improving} instantiated with Gaussian noise (rather than Laplace as in \cite{DNRRV09}), and we show that this modification results in good performance in terms of the OC and ASN metrics of interest. Specifically, we give bounds on the expected sample size of \PrivSprt\ (Theorem \ref{thm.sample}) and the Type I and Type II error (Theorem \ref{thm:errorrate}). We analyze the privacy of \PrivSprt\ through a generalization of DP known as \emph{Renyi differential privacy} (RDP) \cite{mironov2017renyi}, which is often preferred in practice due to its tighter composition properties with Gaussian noise \cite{WBK19}. We show that \PrivSprt\ satisfies RDP (Theorem \ref{thm.priv}), which also implies that is satisfies DP (Theorem \ref{thm.rdptodp}). Finally, we perform experiments to empirically validate our theoretical findings (Section \ref{sec:exp}).




\subsection{Related work}




Background on non-private SPRT was presented earlier in this section, so we focus our attention here on private hypothesis testing.
Private (fixed-sample-size) hypothesis testing has previously been considered in the static setting, where the analyst wishes to test a hypothesis (or family of hypothesis) at a single point in time for a fixed database~\cite{gaboardi2016differentially, gaboardi2018local, sheffet2018locally, couch2019differentially, canonne2019structure}.  Dynamic or online private sequential decision making has recently gained traction in various settings, including recent work on private sequential change-point detection \cite{cummings2018differentially, cummings2019privately, zhang2021single}. These works all rely on the \AboveThresh/\textsc{SparseVector} technique to achieve privacy in sequential change-point problems, where the focus is on the privacy of parameter estimation of change-point. Our work deals with the sequential hypothesis testing problem which is essentially a classification problem, and our aim is to provide a unifying approach by showing that a generalization of this technique can be applied to solve general private sequential hypothesis testing problems for a more general class of accuracy objectives. \newtext{ 
\cite{wang2020differential} considers privatization of SPRT. Their algorithm is to add Laplace noise to the thresholds to generate a noisy stopping time, and then use exponential mechanism to output the binary decision. They show that the algorithm can provide a weaker notion of privacy that is data dependent, and it will only converge to DP when the stopping time goes to $\infty$. In contrast, our results aim to minimize stopping time, and therefore, a direct comparison would not be applicable.}

\section{Preliminaries}
This section provides the background on sequential hypothesis testing (Section \ref{s.sprtprelim}) and the differentially private tools (Section \ref{s.dpprelim}) that will be brought to bear in our \PrivSprt\ algorithm.

\subsection{Sequential hypothesis testing}\label{s.sprtprelim}

A sequence $X$ of data points, $x_1, x_2, \cdots,$ are observed sequentially, i.e., arriving one at a time.  Let $f_t(x_1,\ldots,x_t)$ denote the true joint probability density function (pdf) of the first $t$ observations, $(x_1, x_2, \ldots, x_{t})$. Under the simplest model where the data points are sampled i.i.d. from some distribution $f$, then $f_t(x_1,\ldots,x_t) = \prod_{i=1}^{t} f(x_i)$. In more general dependence models, $f_t(x_1,\ldots,x_t) = \prod_{i=1}^{t} f(x_i | x_{1}, \cdots, x_{i-1})$.

In \emph{sequential hypothesis testing} problems, the analyst has two possible hypotheses on the pdfs -- $f_{0t}$ and $f_{1t}$ -- and her goal is to quickly (i.e., with as few samples as possible) and correctly test the null hypotheses $H_0: f_t=f_{0t}$ against the alternative $H_1: f_t=f_{1t}$.\footnote{For simplicity in the remainder of this paper, we will abuse notation to use the subscripts $0$ and $1$ to indicate probability with respect to the distributions given in $H_0$ and $H_1$, respectively.}  At each time $t$, the analyst must make one of the following three decisions: (1) halt collecting observations and \emph{accept} the null hypothesis $H_0$, (2) halt collecting observations and \emph{reject} the null hypothesis $H_0,$ or (3) continue collecting observations to provide additional information.

There are four main criteria to assess the performance of sequential tests, including two operating characteristic (OC) functions and two average sample number (ASN) functions \cite{wald2004sequential,siegmund2013sequential}. The two OC functions are Type I error, $\Pr_0[ \text{reject } H_0]$ (i.e., rejecting $H_0$ when $H_0$ is true), and Type II error, $\Pr_1[ \text{accept } H_0]$ (i.e., accepting $H_0$ when $H_1$ is true), which address correct decision-making, and are well-studied in the standard classification or hypothesis testing contexts. The two ASN functions are the expected sample size under both the null and alternative hypotheses, i.e.,  $\mathbb{E}_0[T]$ and $\mathbb{E}_1[T]$, which ensure that decisions are made efficiently and that unnecessary costs are not incurred by collecting too many samples. In sequential hypothesis testing problems, the objective is to simultaneously minimize $\mathbb{E}_0[T]$ and $\mathbb{E}_1[T]$ subject to the constraints that Type I and Type II error probabilities are both small.

Wald's sequential probability ratio test (SPRT) \cite{wald1945sequential} is a celebrated optimal solution when testing a simple null $H_0$ \new{, where the joint distribution is completely specified,} 
against a simple alternative $H_1$ under the simplest i.i.d.~model, where the data are independent and identically distributed. The idea behind SPRT is straightforward: the analyst continues to collect observations until she has enough evidence to confidently decide whether $H_0$ or $H_1$ is true, as measured by the cumulative log-likelihood ratio statistic being either too large or too small. Mathematically, at each time $t$, the analyst calculates the cumulative log-likelihood ratio statistic:
$\ell_t=\log \frac{f_{1t} (x_1,\ldots, x_t)}{f_{0t} (x_1,\ldots, x_t)}.$
Under the i.i.d.~model, this test statistic becomes:
$\ell_t=\sum_{i=1}^t\log \frac{f_1(x_i)}{f_0(x_i)}.$
Moreover, the analyst chooses two positive constants $a, b$, and runs the SPRT test until the following stopping time is reached:
$T=\min\{t\ge 1: \ell_t\notin (-a,b)\}.$
After reaching the stopping criterion, a statistical decision is made based on the following rule:
\begin{align*}
\text{Reject }H_0 \quad &\text{ if }\ell_T\ge b,\\
\text{Accept }H_0 \quad &\text{ if } \ell_T\le -a.
\end{align*}
Intuitively, the set $(-a,b)$ is the range of test statistics where the analyst is uncertain between $H_0$ and $H_1$. If the test statistic ever falls outside of this range, then the analyst can have high confidence about one of the hypotheses being true. Under the i.i.d.~model, the SPRT is exactly optimal in the sense of minimizing both expected sample sizes, $\mathbb{E}_0[T]$  and $\mathbb{E}_1[T]$, simultaneously, among all other (sequential or fixed-sample size) tests whose Type I and Type II error probabilities are same as (or smaller than) those of the SPRT \cite{WW48}. Below we denote $x(a) \approxeq  y(a)$ if $x(a)/ y(a) \rightarrow 1$ if $a \rightarrow \infty$ (or if $a \rightarrow 0$).



\begin{theorem}[Error Rates \cite{wald1945sequential}]
The approximation 
of Type I error of SPRT is $\Pr_0[\ell_T\ge b] \approxeq \frac{1-exp(-a)}{\exp(b)-\exp(-a)}$, 
and the approximation of the Type II error of SPRT is
$\Pr_1[\ell_T\le -a] \approxeq \exp(-a)\frac{\exp(b)-1}{\exp(b)-\exp(-a)}$.
\end{theorem}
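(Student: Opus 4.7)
The plan is to derive the two approximations simultaneously by combining a change-of-measure identity (Wald's likelihood ratio identity) with the standard ``ignore the overshoot'' approximation. Denote the Type I error by $\alpha = \Pr_0[\ell_T \ge b]$ and the Type II error by $\beta = \Pr_1[\ell_T \le -a]$. The key observation is that at the stopping time $T$ the log-likelihood ratio exits the interval $(-a,b)$, and if one pretends the excess past the boundary is negligible, then $\ell_T \approx b$ on the rejection event and $\ell_T \approx -a$ on the acceptance event. Making this precise (i.e., showing the overshoot contributes a lower-order term) will be the main obstacle; in the classical treatment one appeals to renewal-theoretic arguments showing the overshoot is $O(1)$ while $a,b \to \infty$, so that $e^{\ell_T}/e^{b}$ and $e^{\ell_T}/e^{-a}$ tend to $1$ in the appropriate sense on the respective events.

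Granting that reduction, the next step is to apply Wald's likelihood ratio identity: for any event $A$ measurable with respect to $x_1,\ldots,x_T$,
\begin{equation*}
\Pr_1[A] \;=\; \mathbb{E}_0\!\left[\exp(\ell_T)\,\mathbf{1}_A\right].
\end{equation*}
I would apply this with $A = \{\ell_T \ge b\}$ and $A = \{\ell_T \le -a\}$ to obtain
\begin{equation*}
1-\beta \;=\; \mathbb{E}_0\!\left[e^{\ell_T}\mathbf{1}_{\ell_T\ge b}\right] \;\approxeq\; e^{b}\alpha,
\qquad
\beta \;=\; \mathbb{E}_0\!\left[e^{\ell_T}\mathbf{1}_{\ell_T\le -a}\right] \;\approxeq\; e^{-a}(1-\alpha),
\end{equation*}
where each approximation replaces $\ell_T$ on the respective event by its boundary value.

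The last step is pure algebra: treat the two displayed relations as a linear system in $(\alpha,\beta)$ and solve. From the second equation $\beta \approxeq e^{-a} - e^{-a}\alpha$; substituting into the first yields $1 - e^{-a} \approxeq (e^{b} - e^{-a})\alpha$, so $\alpha \approxeq (1-e^{-a})/(e^{b}-e^{-a})$, which is the claimed Type I approximation. Plugging this back into $\beta \approxeq e^{-a}(1-\alpha)$ and simplifying the numerator $(e^{b} - e^{-a}) - (1 - e^{-a}) = e^{b} - 1$ gives $\beta \approxeq e^{-a}(e^{b}-1)/(e^{b}-e^{-a})$, the claimed Type II approximation.

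The routine pieces (Wald's identity, the algebraic solve) are standard; the only real technical work is justifying the overshoot approximation, which is where all the $\approxeq$ (as opposed to $=$) signs enter. Since the theorem is stated as an approximation in the regime $a,b \to \infty$, I would simply invoke the renewal-theoretic overshoot bounds from \cite{wald1945sequential,wald2004sequential} rather than re-derive them, and then the rest of the proof is the two-line change-of-measure computation above.
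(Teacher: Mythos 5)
Your proposal is correct and is exactly the classical derivation underlying the cited result: the paper itself gives no proof (it quotes Wald), and Wald's argument is precisely your change-of-measure identity giving $1-\beta \approxeq e^{b}\alpha$ and $\beta \approxeq e^{-a}(1-\alpha)$ after neglecting the overshoot, followed by solving the linear system. The algebra checks out, and deferring the overshoot justification to the renewal-theoretic literature is appropriate for a statement already phrased with $\approxeq$.
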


The additional assumption that the observations $x_t$ are independent and identically distributed is required to give the expected sample size. 
\begin{theorem}[Expected Sample Size \cite{wald1945sequential}]
When $x_1,x_2,\ldots$ are sampled i.i.d., SPRT has expected samples sizes: 
\begin{equation}\label{nonpriv.size1}
\mathbb{E}_1[T]\approxeq \frac{-a\exp(-a)(\exp(b)-1)+b\exp(b)(1-\exp(-a))}{D_{KL}(f_1||f_0)(\exp(b)-\exp(-a))},
\end{equation}
\begin{equation}\label{nonpriv.size2}
\mathbb{E}_0[T]\approxeq \frac{-a(\exp(b)-1)+b(1-\exp(-a))}{-D_{KL}(f_0||f_1)(\exp(b)-\exp(-a))}.
\end{equation}
\end{theorem}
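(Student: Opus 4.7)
The plan is to reduce the computation of $\mathbb{E}_i[T]$ to a computation of $\mathbb{E}_i[\ell_T]$ by invoking Wald's identity on the random walk $\ell_t = \sum_{i=1}^t Z_i$, where $Z_i = \log(f_1(x_i)/f_0(x_i))$ are i.i.d.\ under either hypothesis. Provided the stopping time has finite expectation under each hypothesis (which follows from standard random-walk arguments since the per-step drift $\mathbb{E}_i[Z_1]$ is strictly nonzero), Wald's identity gives $\mathbb{E}_i[\ell_T] = \mathbb{E}_i[T] \cdot \mathbb{E}_i[Z_1]$. A direct calculation of the drift yields $\mathbb{E}_1[Z_1] = D_{KL}(f_1 \| f_0)$ under the alternative and $\mathbb{E}_0[Z_1] = -D_{KL}(f_0 \| f_1)$ under the null. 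Hence the task reduces to approximating the two numerators $\mathbb{E}_0[\ell_T]$ and $\mathbb{E}_1[\ell_T]$, and then dividing by $-D_{KL}(f_0\|f_1)$ and $D_{KL}(f_1\|f_0)$ respectively.

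Next I would apply the classical ``ignore overshoot'' approximation. At the stopping time $T$, $\ell_T$ lies either slightly above $b$ or slightly below $-a$, and Wald's leading-order heuristic replaces $\ell_T$ by its target boundary value, giving
$$\mathbb{E}_i[\ell_T] \approxeq b \cdot \Pr_i[\ell_T \ge b] + (-a) \cdot \Pr_i[\ell_T \le -a].$$
I would then substitute the approximations from the preceding Error Rates theorem: $\Pr_0[\ell_T \ge b] \approxeq (1 - e^{-a})/(e^b - e^{-a})$ and $\Pr_1[\ell_T \le -a] \approxeq e^{-a}(e^b - 1)/(e^b - e^{-a})$, using $\Pr_i[\text{accept } H_0] + \Pr_i[\text{reject } H_0] = 1$ to obtain the complementary probabilities. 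Placing everything over the common denominator $e^b - e^{-a}$ and simplifying the numerators produces $b \cdot e^b(1 - e^{-a}) - a \cdot e^{-a}(e^b - 1)$ for the $i=1$ case and $b(1 - e^{-a}) - a(e^b - 1)$ for the $i=0$ case. Dividing each by the appropriate drift recovers the two asymptotic formulas in the theorem statement exactly.

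The main obstacle is a rigorous justification of the ``ignore overshoot'' step, since in general $\ell_T$ lies strictly outside $(-a, b)$ rather than exactly on the boundary. A fully rigorous argument would appeal to renewal theory (in the nonlattice case) or the discrete analogue (in the lattice case) to control the expected overshoots $\mathbb{E}_i[\ell_T - b \mid \ell_T \ge b]$ and $\mathbb{E}_i[-a - \ell_T \mid \ell_T \le -a]$ and show they are of lower order than $a$ and $b$ in the relevant asymptotic regime. Because the theorem is stated in the $\approxeq$ sense, this error is absorbed into the approximation symbol. Verifying the prerequisite $\mathbb{E}_i[T] < \infty$ for Wald's identity is comparatively routine: since the drift is strictly nonzero under either hypothesis, the walk exits the bounded continuation region $(-a,b)$ in expected finite (indeed exponentially concentrated) time, for example via a standard Chernoff argument applied to $\ell_t$.
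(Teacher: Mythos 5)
Your proposal is correct: the paper does not prove this theorem itself (it is classical background cited from Wald, 1945), and your argument --- Wald's identity $\mathbb{E}_i[\ell_T]=\mathbb{E}_i[T]\,\mathbb{E}_i[Z_1]$ combined with the ignore-overshoot approximation $\mathbb{E}_i[\ell_T]\approxeq b\Pr_i[\ell_T\ge b]-a\Pr_i[\ell_T\le -a]$ and the error-rate approximations from the preceding theorem --- is exactly the standard derivation in the cited source, and your algebra reproduces both displayed formulas. You also correctly identify that the only non-rigorous step, the overshoot, is precisely what the $\approxeq$ notation absorbs, so nothing further is needed.
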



\subsection{Differential privacy}\label{s.dpprelim}
Differential privacy is a statistical notion of database privacy, which ensures that the output of an algorithm will still have approximately the same distribution is a single data entry were to be changed. Differential privacy considers a general database space $\mathcal{D}$. If databases are real-valued and contain a fixed number $n$ of entries, then $\mathcal{D} = \mathbb{R}^n$; in our sequential hypothesis testing setting, our database will be of a random size so $\mathcal{D} = \mathbb{R}^*$. Two databases $X, X' \in \mathcal{D}$ are said to be \emph{neighboring} if they differ in at most one entry.

\begin{definition}[Differential Privacy \cite{DMNS06}]\label{def.dp}
	A randomized algorithm $\mathcal{M}: \mathcal{D} \rightarrow \mathcal{R}$ is \emph{$(\epsilon,\delta)$-differentially private} if for every pair of neighboring databases $X,X' \in \mathcal{D}$, and for every subset of possible outputs $\mathcal{S} \subseteq \mathcal{R}$,
	$\Pr[\mathcal{M}(X) \in \mathcal{S}] \leq \exp(\epsilon)\Pr[\mathcal{M}(X') \in \mathcal{S}] + \delta.$
\end{definition}

Renyi differential privacy (RDP) is a relaxation of differential privacy based on the Renyi divergence, defined as $D_\alpha(P||Q)=\frac{1}{\alpha-1}\log \mathbb{E}_Q \left( \frac{P(x)}{Q(x)}\right)^\alpha$. This privacy notion requires that the distribution over outputs on two neighboring databases is close in Renyi divergence.

\begin{definition}[Renyi Differential Privacy \cite{mironov2017renyi}]\label{def.rdp}
	A randomized algorithm $\mathcal{M}: \mathcal{D} \rightarrow \mathcal{R}$ is $(\alpha,\eps)$-RDP with order $\alpha\ge 1$, if for neighboring datasets $X,X' \in \mathcal{D}$ it holds that
$D_\alpha(\mathcal{M}(X)||\mathcal{M}(X'))\le \eps.$
\end{definition}

Renyi differential privacy is desirable for its straightforward \emph{composition}, meaning that the privacy parameters degrade gracefully as additional computations are performed on the data, even when the private mechanisms are chosen adaptively. This allows us to design RDP mechanisms using simple private building blocks.

\begin{theorem}[Basic RDP Composition \cite{mironov2017renyi}]
	Let $\mathcal{M}_1: \mathcal{D} \rightarrow \mathcal{R}$ is $(\alpha,\eps_1)$-RDP and $\mathcal{M}_2: \mathcal{D} \rightarrow \mathcal{R}$ is $(\alpha,\eps_2)$-RDP, then the mechanism defined as $(\mathcal{M}_1, \mathcal{M}_2)$ satisfies $(\alpha, \eps_1+\eps_2)$-RDP.
\end{theorem}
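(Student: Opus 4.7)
The plan is to unpack the definition of Renyi divergence and exploit the factorization of the joint density over the two outputs. Write $P$ for the distribution of $(\mathcal{M}_1(X), \mathcal{M}_2(X))$ and $Q$ for the distribution of $(\mathcal{M}_1(X'), \mathcal{M}_2(X'))$ on a pair of neighboring databases $X, X'$. Our goal is to show $D_\alpha(P \| Q) \le \eps_1 + \eps_2$. Since the two mechanisms are run on the same database but with independent randomness, the joint density factors as $P(y_1, y_2) = P_1(y_1)\,P_2(y_2 \mid y_1)$ where $P_1$ is the marginal of $\mathcal{M}_1(X)$ and $P_2(\cdot \mid y_1)$ is the conditional distribution of $\mathcal{M}_2(X)$ given $\mathcal{M}_1(X) = y_1$, and analogously for $Q$. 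In the non-adaptive setting $P_2(\cdot \mid y_1)$ is just the marginal of $\mathcal{M}_2(X)$, but keeping the conditional form costs nothing and yields the adaptive version for free.

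Next I would expand the Renyi divergence and split the likelihood ratio:
\begin{align*}
\mathbb{E}_Q \!\left[\!\left(\tfrac{P(y_1,y_2)}{Q(y_1,y_2)}\right)^{\!\alpha}\right]
&= \mathbb{E}_{y_1 \sim Q_1}\!\left[\!\left(\tfrac{P_1(y_1)}{Q_1(y_1)}\right)^{\!\alpha} \mathbb{E}_{y_2 \sim Q_2(\cdot\mid y_1)}\!\left[\!\left(\tfrac{P_2(y_2\mid y_1)}{Q_2(y_2 \mid y_1)}\right)^{\!\alpha}\right]\right].
\end{align*}
The inner expectation is exactly $\exp\bigl((\alpha-1)\, D_\alpha(P_2(\cdot\mid y_1)\,\|\,Q_2(\cdot\mid y_1))\bigr)$, which by the $(\alpha,\eps_2)$-RDP property of $\mathcal{M}_2$ (for every fixed first argument) is at most $\exp((\alpha-1)\eps_2)$. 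Pulling that constant outside leaves $\exp((\alpha-1)\eps_2) \cdot \mathbb{E}_{y_1 \sim Q_1}\bigl[(P_1/Q_1)^\alpha\bigr]$, and by the $(\alpha,\eps_1)$-RDP property of $\mathcal{M}_1$ the remaining expectation is at most $\exp((\alpha-1)\eps_1)$.

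Finally, taking $\frac{1}{\alpha - 1}\log$ of the product bound $\exp((\alpha-1)(\eps_1 + \eps_2))$ yields $D_\alpha(P\|Q) \le \eps_1 + \eps_2$, which is the desired $(\alpha, \eps_1+\eps_2)$-RDP guarantee. The only mildly delicate step is justifying the factorization of the likelihood ratio and interchanging the iterated expectation rigorously (essentially Fubini together with the chain rule for Radon--Nikodym derivatives); everything else is definitional. The main conceptual point — and the only place the RDP hypothesis on $\mathcal{M}_2$ is actually used — is that bounding the inner expectation requires the RDP inequality to hold for each realization $y_1$ of the first mechanism, which is automatic in the non-adaptive statement given here.
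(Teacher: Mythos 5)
Your argument is correct and is essentially the standard proof of RDP composition from the cited reference \cite{mironov2017renyi}: factor the joint likelihood ratio, bound the inner conditional Renyi moment by $\exp((\alpha-1)\eps_2)$, pull it out as a constant, and then bound the outer moment by $\exp((\alpha-1)\eps_1)$. The paper itself states this theorem as a known preliminary without proof, so there is no alternative argument to compare against; your derivation (including the observation that the conditional form gives adaptive composition for free) is exactly what the cited source does.
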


While DP also satisfies its own variant of composition, RDP is especially amenable to composition of Gaussian noise mechanisms. We can also easily translate between the notions of RDP and DP because any $(\alpha,\eps)$-RDP mechanism is also $(\eps_\delta,\delta)$-differential privacy for $\delta>0$, as shown below in Theorem \ref{thm.rdptodp}. Thus when running multiple RDP mechanisms, a common approach is to first perform RDP composition across the mechanisms and then translate the RDP guarantee into one of differential privacy.

\begin{theorem}[From RDP to DP \cite{mironov2017renyi}]\label{thm.rdptodp}
	If $\mathcal{M}$ is $(\alpha,\eps)$-RDP, then it is also $(\eps+\frac{\log 1/\delta}{\alpha-1}, \delta)$-differential privacy for any $0<\delta<1$.
\end{theorem}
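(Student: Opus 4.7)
The plan is to work with the privacy loss random variable and apply a Markov-style tail bound derived from the Renyi divergence assumption, then split any output event into a ``typical'' part (where the density ratio is bounded) and a ``bad'' part (which absorbs the $\delta$ slack).

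Concretely, fix neighboring databases $X, X'$ and let $P, Q$ denote the output distributions of $\mathcal{M}(X)$ and $\mathcal{M}(X')$. Define the privacy loss $Z(y) = \log(P(y)/Q(y))$ and view it as a random variable under $y \sim P$. Unpacking the definition of Renyi divergence, the hypothesis $D_\alpha(P \| Q) \le \eps$ says $\mathbb{E}_Q[(P/Q)^\alpha] \le \exp((\alpha-1)\eps)$, which by a change of measure is equivalent to $\mathbb{E}_P[\exp((\alpha-1)Z)] \le \exp((\alpha-1)\eps)$. First I would record this moment bound, since it is the only analytic input from the RDP hypothesis that we actually need.

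Next I would apply Markov's inequality to the nonnegative random variable $\exp((\alpha-1)Z)$ under $P$, yielding
$$\Pr_P[Z > t] \le \exp(-(\alpha-1)t)\,\mathbb{E}_P[\exp((\alpha-1)Z)] \le \exp((\alpha-1)(\eps-t))$$
for every threshold $t > \eps$. Choosing the specific threshold $t^* = \eps + \tfrac{\log(1/\delta)}{\alpha-1}$ collapses the right-hand side to exactly $\delta$. Thus the event $B = \{y : P(y)/Q(y) > e^{t^*}\}$ has $P$-mass at most $\delta$; on the complement $B^c$, the density ratio is pointwise bounded by $e^{t^*}$.

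Finally, for any measurable output set $S \subseteq \mathcal{R}$, I would split
$$P(S) = P(S \cap B) + P(S \cap B^c) \le P(B) + e^{t^*}\,Q(S \cap B^c) \le \delta + e^{t^*}\,Q(S),$$
which is exactly the $(\eps + \tfrac{\log(1/\delta)}{\alpha-1},\,\delta)$-DP guarantee in Definition \ref{def.dp}. Because the roles of $X$ and $X'$ are symmetric in the conclusion we need, the same argument applied with $P$ and $Q$ swapped covers the other neighboring direction.

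None of the steps are genuinely hard; the only subtle point (and the one I would be careful with) is the change-of-measure identity $\mathbb{E}_Q[(P/Q)^\alpha] = \mathbb{E}_P[(P/Q)^{\alpha-1}]$, which requires handling $Q$-null sets correctly (on such sets $P$ must also vanish, else $D_\alpha$ is infinite and the hypothesis already fails). Beyond that, the proof is a clean two-line Markov argument plus the standard ``good set / bad set'' decomposition used throughout the differential privacy literature.
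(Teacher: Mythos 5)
This theorem is imported background: the paper cites it to \cite{mironov2017renyi} and gives no proof of its own, so there is nothing in the paper to compare against line by line. Your argument is correct and is essentially the standard proof of the RDP-to-DP conversion (and the one in Mironov's paper): the change of measure $\mathbb{E}_Q[(P/Q)^\alpha]=\mathbb{E}_P[(P/Q)^{\alpha-1}]$, Markov's inequality on $\exp((\alpha-1)Z)$ to get $\Pr_P[Z>t^*]\le\delta$ at $t^*=\eps+\tfrac{\log(1/\delta)}{\alpha-1}$, and the good-set/bad-set decomposition; your handling of the $Q$-null sets (where finiteness of $D_\alpha$ forces $P\ll Q$) and of the symmetry between the two neighboring directions is also correct, with the only implicit restriction being $\alpha>1$, which the stated bound requires anyway.
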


Mechanisms for achieving both privacy notions typically add noise that scales with the \emph{sensitivity} of the function being evaluated, which is the maximum change in the function's value between two neighboring databases. For a real-valued function $q$, this is formally defined as: $\Delta q = \max_{X,X' \text{ neighbors}} | q(X) - q(X')|$.

The Gaussian mechanism with parameters $(\eps, \delta, \sigma)$ takes in a function $q$, database $X$, and outputs $q(X)+\mathcal{N}(0,\sigma^2)$. The scale of the noise is fully specified as $\sigma=\sqrt{2\log(1.25/\delta)}\Delta q/\eps,$  given the privacy parameters $\eps$ and $\delta$ and the query sensitivity $\Delta q$. 

\begin{theorem}[Privacy of Gaussian Mechanism \cite{dwork2014algorithmic}]\label{thm.dpgaussian}
	The Gaussian Mechanism with parameter $\sigma= \sqrt{2\log(1.25/\delta)}\Delta q/\eps$ is $(\eps,\delta)$-differentially private.
\end{theorem}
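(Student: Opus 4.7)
The plan is to bound the privacy loss random variable of the Gaussian mechanism and reduce the $(\eps,\delta)$-DP guarantee to a standard Gaussian tail estimate.

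First, I would reduce to a one-dimensional analysis. Fix neighboring $X, X'$; since only the noise is random, the analysis depends on the pair only through $\Delta = q(X) - q(X')$ with $|\Delta| \leq \Delta q$. By shifting coordinates, assume $q(X) = 0$, so the output densities are $p(y) = \phi_\sigma(y)$ and $p'(y) = \phi_\sigma(y - \Delta)$, where $\phi_\sigma$ is the centered Gaussian density of standard deviation $\sigma$. A direct calculation gives the log-density ratio
\[
\ell(y) := \log \frac{p(y)}{p'(y)} = \frac{\Delta^2 - 2\Delta y}{2\sigma^2}.
\]
When $y \sim p$, the variable $\ell(y)$ is itself Gaussian, $\ell(y) \sim \mathcal{N}\bigl(\Delta^2/(2\sigma^2),\, \Delta^2/\sigma^2\bigr)$.

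Second, I would invoke the standard reduction that $(\eps,\delta)$-DP follows once $\Pr_{y \sim p}[\ell(y) > \eps] \leq \delta$: any output set $S$ splits along $\{\ell > \eps\}$ and $\{\ell \le \eps\}$, with the first piece bounded by $\delta$ and the second satisfying $\Pr[\mathcal{M}(X) \in S \cap \{\ell \le \eps\}] \le e^\eps \Pr[\mathcal{M}(X') \in S]$ from the definition of $\ell$. Writing $y = \sigma Z$ with $Z \sim \mathcal{N}(0,1)$, the tail condition becomes
\[
\Pr_{Z \sim \mathcal{N}(0,1)}\!\left[Z > \frac{\eps \sigma}{|\Delta|} - \frac{|\Delta|}{2\sigma}\right] \le \delta.
\]

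Third, I would solve this inequality for $\sigma$ using the Gaussian tail bound $\Pr[Z > t] \le \frac{1}{t\sqrt{2\pi}} e^{-t^2/2}$. Taking the worst case $|\Delta| = \Delta q$ and the ansatz $\sigma = c\, \Delta q / \eps$ reduces the problem to finding the smallest admissible constant $c$. The hard part will be tracking the constants precisely: one must balance the exponential factor $e^{-t^2/2}$ against the polynomial prefactor $1/(t\sqrt{2\pi})$, and simultaneously absorb the lower-order correction $-|\Delta|/(2\sigma)$ in the threshold, which together produce the specific numerical constant $1.25$ inside the logarithm. Once this bookkeeping is complete, the choice $\sigma = \sqrt{2\log(1.25/\delta)}\,\Delta q/\eps$ emerges exactly as the minimal scale making the tail condition hold for all $|\Delta| \le \Delta q$.
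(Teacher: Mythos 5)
The paper does not prove this statement; it is quoted verbatim from Dwork and Roth as a background fact, so there is no in-paper argument to compare against. Your proposal is the canonical proof from that reference (privacy-loss random variable, pointwise $(\eps,\delta)$ reduction, Gaussian tail bound), and every step you do write out is correct, including the log-ratio $\ell(y)=(\Delta^2-2\Delta y)/(2\sigma^2)$ and its distribution under $p$. Two small caveats: the constant $1.25$ only works under the additional hypothesis $\eps<1$ (present in the original theorem but dropped in this paper's restatement), and the final tail computation that actually produces $1.25$ is deferred as ``bookkeeping'' rather than carried out---also, the resulting $\sigma$ is sufficient rather than minimal, since the classical Gaussian-mechanism bound is known not to be tight.
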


The \AboveThresh\ algorithm \cite{DNRRV09, dwork2014algorithmic} is a DP mechanism for handling a sequence of queries arriving online. It takes in a potentially unbounded stream of queries, compares the answer of each query to a fixed noisy threshold, and halts when it finds a noisy answer that exceeds the noisy threshold \newtext{(denoted as $\top$, and otherwise $\bot$)}, where the added noise follows the Laplace distribution. In many cases, more concentrated noise (e.g., Gaussian) is preferred, and \cite{zhu2020improving} gives the generalized version of \GenAboveThresh\ (presented in Algorithm \ref{alg.svt}), using general noise-adding mechanisms $\mathcal{M}_1$ and $\mathcal{M}_2$. These mechanisms can be any RDP algorithms that take in a real-valued input and produce a noisy estimate of the value. Our algorithm \PrivSprt\ will rely on an instantiation of \GenAboveThresh\ using Gaussian mechanisms for differential privacy. 

{\centering
	\begin{minipage}{\linewidth}
		\begin{algorithm}[H]
			\caption{Generalized Above Noisy Threshold: \GenAboveThresh($X, \Delta, \{q_1, q_2, \ldots \}, H, \mathcal{M}_1, \mathcal{M}_2$) }
			\begin{algorithmic}
				\State \textbf{Input:} database $X$, stream of queries $\{q_1, q_2, \ldots \}$ each with sensitivity $\Delta$, threshold $H$, noise-adding mechanisms $\mathcal{M}_1, \mathcal{M}_2$ that each add noise to their real-valued input.
				\State Let $\hat{H} \sim \mathcal{M}_1(H)$
				\For {each query $i$}
				\State Let $\hat{q}_i \sim \mathcal{M}_2(q_i(X))$
				\If {$\hat{q}_i>\hat{H}$}
				\State Output $a_i=\top$
				\State Halt
				\Else
				\State Output $a_i=\bot$
				\EndIf
				\EndFor
			\end{algorithmic}\label{alg.svt}
		\end{algorithm}
	\end{minipage}
}

\begin{theorem}[Privacy of \GenAboveThresh\ \cite{zhu2020improving}.] \label{thm.privsvt}
	Let $\mathcal{M}_1$ be any private mechanism that satisfies $\eps_1(\alpha)$-RDP for queries with sensitivity $\Delta$, and $\mathcal{M}_2$ be any private mechanism that satisfies $\eps_2(\alpha)$-RDP for queries with sensitivity $2\Delta$. Let $T$ be a random variable indicating the stopping time of Algorithm \ref{alg.svt} instantiated with ($X, \Delta, \{q_1, q_2, \ldots \}, H, \mathcal{M}_1, \mathcal{M}_2$). 
	Then Algorithm  \ref{alg.svt}  (denotes by $\mathcal{M}$) satisfies
	\begin{equation}\label{priv.svt}
	D_\alpha(\mathcal{M}(X)||\mathcal{M}(X'))\le \eps_1(\alpha)+\eps_2(\alpha) +\tfrac{\log\sup \mathbb{E}[T|Z_1]}{\alpha-1}, 
	\end{equation}
	and
	\begin{align}
	D_\alpha(\mathcal{M}(X)||\mathcal{M}(X'))\le & \frac{\alpha-(\gamma-1/\gamma)}{\alpha-1}\eps_1(\frac{\gamma}{\gamma-1}\alpha)+\eps_2(\alpha) \notag \\ &+ \frac{\log \mathbb{E}_{Z_1} (\mathbb{E}[T|Z_1]^\gamma)}{\gamma(\alpha-1)}, \label{priv.svt2}
	\end{align}
	for all $\gamma>1$ and $1<\alpha<\infty$, where $Z_1$ is the added noise from $\mathcal{M}_1$.
\end{theorem}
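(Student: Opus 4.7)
The plan is to analyze \GenAboveThresh\ as a two-stage mechanism and encode its output by the (random) stopping time $T$, since all prior outputs are $\bot$ and the final output is $\top$. I would first condition on the noise $Z_1$ drawn by $\mathcal{M}_1$: given $Z_1 = z_1$ the noisy threshold $\hat{H}$ is fixed, the conditional law of $T$ factors as independent noisy ``below-threshold'' tests on each $q_i(X)$ (since the $Z_2^i$ are drawn independently), and the unconditional Renyi divergence of $\mathcal{M}(X)$ versus $\mathcal{M}(X')$ splits into a piece attributable to $Z_1$ and a piece attributable to the sequence of $\mathcal{M}_2$-tests.

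Next, I would couple the events $\{T=t\}$ under $X$ and under $X'$ using the standard sparse-vector shift trick. Each query can change by at most $\Delta$ between neighboring databases, and this shift can be absorbed into $\hat{H}$: the intermediate $\bot$ events under $X$ then correspond exactly to $\bot$ events under $X'$ after shifting the threshold by $\Delta$, paying $\eps_1(\alpha)$ by the RDP of $\mathcal{M}_1$ for sensitivity $\Delta$. The final round $t = T$, however, must simultaneously absorb the original query shift and the threshold shift, giving an effective sensitivity of $2\Delta$ and a cost of $\eps_2(\alpha)$ from the RDP guarantee of $\mathcal{M}_2$. This accounts for $\eps_1(\alpha) + \eps_2(\alpha)$ of the total bound.

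The main obstacle, and the source of the remaining $\tfrac{\log \sup \mathbb{E}[T|Z_1]}{\alpha-1}$ term, is that the number of intermediate rounds $T$ is itself data-dependent, so the shift coupling does not preserve the exact event structure. To handle this, I would expand the Renyi $\alpha$-moment of the likelihood ratio as a sum over $t$ of $\Pr[T=t|X,z_1]^{\alpha}/\Pr[T=t|X',z_1]^{\alpha-1}$, bound the per-round contribution using the $\mathcal{M}_2$ RDP, and apply Jensen's inequality to pull the sum over $t$ inside the logarithm, producing first $\log \mathbb{E}[T|z_1]$ and then $\log \sup_{z_1}\mathbb{E}[T|Z_1]$ after integrating out $Z_1$. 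For the refined bound \eqref{priv.svt2}, the plan is to replace Jensen's inequality by Holder's with conjugate exponents $\gamma$ and $\gamma/(\gamma-1)$ at the step where the expectation over $Z_1$ is pulled inside the logarithm: this trades a worse order on $\mathcal{M}_1$ (its RDP parameter is evaluated at $\gamma\alpha/(\gamma-1)$ rather than $\alpha$, with prefactor $(\alpha - (\gamma - 1/\gamma))/(\alpha-1)$) for the much tighter average-case stopping-time term $\log \mathbb{E}_{Z_1}[\mathbb{E}[T|Z_1]^\gamma]/(\gamma(\alpha-1))$, which is typically far smaller than the worst-case supremum appearing in the first bound.
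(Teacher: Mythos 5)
The paper does not prove this theorem; it is imported verbatim from the cited reference \cite{zhu2020improving} as a black-box tool, so there is no in-paper proof to compare against. That said, your proposal reconstructs the argument of the cited work essentially correctly: conditioning on the threshold noise $Z_1$, using the sparse-vector shift trick so that the intermediate $\bot$ rounds are absorbed into a $\Delta$-shift of the threshold (charged to $\mathcal{M}_1$ at sensitivity $\Delta$) while the halting round must absorb both the query shift and the threshold shift (hence sensitivity $2\Delta$ charged to $\mathcal{M}_2$), and then handling the data-dependent number of rounds via the stopping time, with H\"older's inequality at the outer integral over $Z_1$ yielding the refined bound \eqref{priv.svt2} in exchange for evaluating $\eps_1$ at the inflated order $\frac{\gamma}{\gamma-1}\alpha$.

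One technical correction: the term $\log\mathbb{E}[T\,|\,Z_1]$ does not come from Jensen's inequality. After the shift, the $\alpha$-moment decomposes as $\sum_t A_t(z)\cdot\frac{p_t(z)^{\alpha}}{p'_t(z+\Delta)^{\alpha-1}}$, where $A_t(z)=\prod_{i<t}\Pr[\bot_i\mid X,z]=\Pr[T\ge t\mid z]$ is the survival probability and the last-round ratio is bounded pointwise by $e^{(\alpha-1)\eps_2(\alpha)}$ (valid because the halting test is a binary post-processing of $\mathcal{M}_2$, so each of the two terms in its R\'enyi moment is individually at most the total). What remains is the elementary identity $\sum_{t\ge 1}\Pr[T\ge t\mid z]=\mathbb{E}[T\mid z]$, which is where the expected stopping time enters. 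Relatedly, your opening claim that the unconditional R\'enyi divergence ``splits'' into a $Z_1$ piece and an $\mathcal{M}_2$ piece is only heuristic --- R\'enyi divergence is not additive over mixtures --- and the rigorous version keeps everything inside a single integral over $z$ and only separates the factors at the H\"older (or supremum) step. Neither issue changes the architecture of your argument, which is sound.
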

In the case where the expected length is bounded by $t_{\text{max}}$, Theorem \ref{thm.privsvt} implies an RDP bound of the form $\eps_1(\alpha)+\eps_2(\alpha) +\log(1+t_{\text{max}})/(\alpha-1)$.

%

\section{Private Sequential Hypothesis Testing}
In this section, we present our main result, which is a differentially private algorithm for the sequential hypothesis testing problem that also has small expected sample size and low Type I and Type II errors. We present our \PrivSprt\ algorithm in Section \ref{sec:algo} and the theoretical results on privacy, error rates, and sample size in Section \ref{sec:thm}.

\subsection{\PrivSprt\ algorithm}\label{sec:algo}

We present our algorithm for private sequential hypothesis testing, \PrivSprt, given formally in Algorithm \ref{alg.privsprt}. The algorithm is a private version of SPRT, and it uses two parallel instantiations of \GenAboveThresh\ to ensure privacy of the statistical decision. It instantiates two Gaussian mechanims with parameters $\sigma_1$ and $\sigma_2$ as the noise-adding mechanisms, $\mathcal{M}_1$ and $\mathcal{M}_2$, respectively. At each time $t$, the algorithm computes the log-likelihood ratio $\ell_t$ for $x_1,x_2, \ldots, x_t$, and uses the Gaussian mechanism to add noise to the log-likelihood ratio.
It then compares this noisy statistic against two pre-fixed noisy thresholds that depend on the SPRT decision thresholds $a$ and $b$, and the other Gaussian mechanism with parameter $\sigma_1$. The stopping condition of \PrivSprt\ is similar to that of SPRT, only using noisy versions of the thresholds. Once the stopping condition is reached, the algorithm stops collecting additional samples and outputs its statistical decision.

It is useful to highlight that we add noises to the cumulative log-likelihood ratio statistics, will allow us to maintain the first-order statistical optimality of our proposed algorithms. Here, the first-order optimality means the expected sample sizes of our algorithms subject to the privacy constraints converge to the classical optimal non-private expected sample size results up to $O(1)$. Meanwhile, we should mention that 
one could also add noises individual log-likelihood ratio statistics to satisfy the privacy constraints, but doing so will  severely affect the expected sample sizes, and thus yield to algorithms that are suboptimal from the statistical efficiency viewpoint.

The sensitivity of the log-likelihood ratios is defined as:
$\Delta(\ell)=\max_x \log \frac{f_1(x)}{f_0(x)}-\min_{x'}\log \frac{f_1(x')}{f_0(x')}.$
For certain distributions, including Gaussians, the sensitivity $\Delta(\ell)$ is unbounded and therefore would require infinite noise to preserve privacy. We instead use a truncation parameter $A>0$ to control the sensitivity of the log-likelihood ratio calculation, and add noise proportional to the post-truncation range. We note that the idea of truncating the likelihood  for privacy also appears in \cite{canonne2019structure} for private simple hypotheses testing and \cite{zhang2021single} for private sequential change-point detection. The $A$-truncated log-likelihood ratio is
\[\ell_t(A) = \textstyle\sum_{i=1}^t [ \log \tfrac{f_1(x_i)}{f_0(x_i)}]^{A}_{-A},\]
where the truncation operation is defined as $[x]^{A}_{-A}= -A, \text{if } x<-A; A, \text{if }  x>A; x, \text{otherwise}.$

{\centering
	\begin{minipage}{\linewidth}
		\begin{algorithm}[H]
			\caption{Private Sequential Probability Ratio Test: PrivSPRT($X, f_1, f_2 , -a, b, \sigma_1, \sigma_2, A$) }
			\begin{algorithmic}
				\State \textbf{Input:} database $X$, distributions $f_0, f_1$, SPRT thresholds $-a, b$, Gaussian mechanisms $\mathcal{M}_1$ with parameter $(\eps'/2, \delta, \sigma_1)$ and $\mathcal{M}_2$ with parameters $(\eps'/2, \delta, \sigma_2)$, truncation parameter $A$
				\State Let $\hat{-a} \sim \mathcal{M}_1(-a)$ and $\hat{b} \sim \mathcal{M}_1(b)$
				\For {each time $t$}
				\State Compute $\ell_t(A)=\sum_{i=1}^t \left[ \log \frac{f_1(x_i)}{f_0(x_i)}\right]^{A}_{-A}$
				\State Let $\hat{\ell_t^a} \sim \mathcal{M}_2(\ell_t(A))$ and $\hat{\ell_t^b} \sim \mathcal{M}_2(\ell_t(A))$
				\If {$\hat{\ell_t^b}>\hat{b}$}
				\State Halt and output $d=1$ (reject $H_0$)
				\ElsIf {$\hat{\ell_t^a}<\hat{-a}$}
				\State Halt and output $d=0$ (accept $H_0$)
				\Else
				\State Proceed to the next iteration
				\EndIf
				\EndFor
			\end{algorithmic}\label{alg.privsprt}
		\end{algorithm}
	\end{minipage}
}

\paragraph{Comparing to standard \AboveThresh.}
One may wonder why \GenAboveThresh\ is needed, and whether the original \AboveThresh\ algorithm of \cite{DNRRV09, dwork2014algorithmic} with Laplace noise (as referred to as \emph{\textsc{SparseVector}}) would be sufficient, perhaps with some loss in accuracy. In fact, this change to Laplace noise would break the desirable statistical properties of (non-private) SPRT. \new{ The properties of the SPRT depends on the overshoot of $\ell_{T}-b$ or $\ell_{T}-(-a)$, and to maintain the first-order optimality on the expected sample size, controlling the second moments of the noisy statistics is necessary. Adding Laplace noise will make the variance too large, and thus the desirable properties will break down.}
Empirically, we show in Section \ref{sec:exp} that using Laplace noise instead of Gaussian noise results in undesirable performance. 
On the theoretical side, statistical analysis of the SPRT is traditionally based on renewal theory and overshoot analysis in applied probability, which both rely heavily on the central limit theorem (CLT), and thus the standard techniques are still applicable when adding Gaussian noise for privacy.  On the other hand, if we add Laplace noise, the standard statistical techniques are inapplicable to characterize the overshoots; it remains an open problem to develop new tools to analyze the corresponding statistical properties.



\subsection{Theoretical results on privacy, sample size, and error rates}\label{sec:thm}

In this subsection, we provide formal results on the privacy guarantees and statistical properties of \PrivSprt. For analyzing the expected sample size, we will relate $\mathbb{E}_0[T]$ and $\mathbb{E}_1[T]$ to the input parameters $a,b$. Similarly for analyzing the error rates, we will relate the Type I and Type II error to $a$ and $b$. Recall that these errors respectively correspond to the false positive and false negative rates of the algorithm, which can be respectively defined as $\alpha=\Pr_{0}[\ell_t(A)+Z_t^b\ge b+Z_b]$ and $\beta=\Pr_{1}[\ell_t(A)+Z_t^a\le a+Z_a]$ from \PrivSprt. 

While our statistical properties of sample size and error rate are analyzed under the assumption that $x_1, x_2, \ldots$ follow either $H_0$ or $H_1$, as is standard in the statistics literature, our privacy guarantees hold unconditionally, regardless of the actual data distribution.


\paragraph{Privacy.}
Privacy of \PrivSprt\ follows by composition of two parallel instantiation of Algorithm \ref{alg.svt}, one each for the upper and lower bounds on $\ell_t$. Theorem \ref{priv.svt} gives Renyi divergence bounds for the outputs on two neighboring databases for \GenAboveThresh, but it only implies Renyi differential privacy when the conditional expectation \new{of the stopping time} or the moments of conditional expectation \new{of the stopping time} are bounded.
\cite{zhu2020improving} shows that the stopping time of \GenAboveThresh\ instantiated with Gaussian noise and non-negative queries has bounded moments of the conditional expectation \new{of the stopping time},  and thus it satisfies RDP.  However, in our case, the log-likelihood ratio queries can be negative, and this result cannot be immediately applied in our setting. Therefore, to prove that \PrivSprt\ is private, we must show that the expectation of the stopping time $T$ is bounded. 
\newtext{We remark that we can alternatively halt Algorithm \ref{alg.privsprt} when $t$ reaches an upper bound, and then make a decision using hypothesis testing methods when the sample size is fixed. However, this approach requires new analysis for the sample size and error rates \cite{siegmund2013sequential}. }
The full proof of Theorem \ref{thm.priv} appears in Appendix \ref{app.privacy}.

\begin{restatable}[Privacy]{theorem}{privacy}\label{thm.priv}
Let $T_A= E_{Z_A}[1+\rho_1^{-1}+\frac{5(a+Z_A)+3\sqrt{2}\sigma_2}{2\mu_0}]^\gamma$ and $T_B= E_{Z_B}[1+\rho_0^{-1}+\frac{5(b+Z_B)+3\sqrt{2}\sigma_2}{2\mu_1}]^\gamma$, where $\rho_0=1-\exp(-\frac{(1-c)\mu_1^2}{2A^2})$ and $\rho_1=1-\exp(-\frac{(1-c)\mu_0^2}{2A^2})$. Then algorithm \ref{alg.privsprt} satisfies $(\alpha, \frac{\alpha\gamma/(\gamma-1)-1}{\alpha-1}\frac{2\alpha A^2}{\sigma_1^2}+\frac{4\alpha A^2}{\sigma_2^2} + \frac{2\log \max\{T_A,T_B\}}{\gamma(\alpha-1)})$-RDP, for any $1<\alpha<\infty$.
\end{restatable}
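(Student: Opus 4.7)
The plan is to recognize Algorithm \ref{alg.privsprt} as two parallel instantiations of \GenAboveThresh---one detecting the upward crossing $\ell_t(A)\ge b$ and one detecting the downward crossing $\ell_t(A)\le -a$---both running on the same truncated log-likelihood query but with independent Gaussian noise. After truncation, the sensitivity of $\ell_t(A)$ under a single-entry change is $\Delta=2A$, so $\mathcal{M}_1$ (Gaussian with parameter $\sigma_1$) is $\frac{2A^2\alpha}{\sigma_1^2}$-RDP for queries of sensitivity $\Delta$, and $\mathcal{M}_2$ (Gaussian with parameter $\sigma_2$) gives the analogous RDP in $\sigma_2$. Applying the sharper inequality \eqref{priv.svt2} of Theorem \ref{thm.privsvt} with parameter $\gamma$ to each sub-mechanism and then composing the two Renyi divergences additively produces, after substituting $\eps_1(\tfrac{\gamma}{\gamma-1}\alpha)$ and summing the $\eps_2$ contributions of the two instances, the two explicit privacy terms $\frac{\alpha\gamma/(\gamma-1)-1}{\alpha-1}\cdot\frac{2\alpha A^2}{\sigma_1^2}$ and $\frac{4\alpha A^2}{\sigma_2^2}$. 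This reduces the proof to controlling the conditional stopping-time moment $\mathbb{E}_{Z_1}[\mathbb{E}[T^\star\mid Z_1]^\gamma]$ for each instantiation.

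The core step---and the main obstacle---is this moment bound. For the lower-threshold instance, the stopping time is the first $t$ for which $\ell_t(A)+Z_t^a\le -a-Z_A$, with $Z_t^a\sim\mathcal{N}(0,\sigma_2^2)$ the fresh query noise and $Z_A\sim\mathcal{N}(0,\sigma_1^2)$ the fixed threshold noise. Since the truncated summands lie in $[-A,A]$ with mean $-\mu_0<0$, Hoeffding's inequality yields $\Pr[\ell_t(A)+t\mu_0 \ge c\, t\mu_0]\le \exp(-\tfrac{(1-c)\mu_0^2}{2A^2}t)$, which translates into a geometric stopping-time tail with parameter $\rho_1=1-\exp(-\tfrac{(1-c)\mu_0^2}{2A^2})$. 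Combining this with a standard Gaussian-tail estimate for the query noise (the origin of the $3\sqrt{2}\sigma_2$ term) and treating $a+Z_A$ as the fixed target given the conditioning on $Z_A$, one stochastically dominates the stopping time by $1+\mathrm{Geom}(\rho_1)+\tfrac{5(a+Z_A)+3\sqrt{2}\sigma_2}{2\mu_0}$; raising this to the $\gamma$-th power and averaging over $Z_A$ gives the stated bound $T_A$, and a symmetric argument for the upper-threshold instance (with $\mu_1$ and $\rho_0$) gives $T_B$.

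Putting everything together, since the overall stopping time $T=\min(T_A^\star,T_B^\star)$ is stochastically dominated by either sub-stopping time, the log-moment terms produced by the two applications of \eqref{priv.svt2} are each at most $\tfrac{\log T_A}{\gamma(\alpha-1)}$ and $\tfrac{\log T_B}{\gamma(\alpha-1)}$, and their sum is bounded by $\tfrac{2\log\max\{T_A,T_B\}}{\gamma(\alpha-1)}$, matching the last summand in the claimed RDP bound. The hardest technical piece will be the random-walk hitting-time analysis used to produce the precise form of $T_A$ and $T_B$: calibrating the Hoeffding constant $c$, correctly coupling the interaction between the random threshold $Z_A$, the cumulative drift, and the Gaussian query noise, and ensuring the final stochastic domination has exactly the coefficients $5$ and $3\sqrt{2}$ all require careful tracking. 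A secondary subtlety is that while the paper claims unconditional privacy, the bounds $T_A,T_B$ are phrased in terms of the drifts $\mu_0,\mu_1$ of the truncated log-likelihood ratio, so the argument implicitly applies the downward bound under the distribution for which $\mu_0>0$ (drift toward $-a$) and the upward bound under the one for which $\mu_1>0$ (drift toward $b$), yielding a worst-case bound that covers both data-generating scenarios.
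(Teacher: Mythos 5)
Your proposal follows essentially the same route as the paper: decompose \PrivSprt\ into two parallel \GenAboveThresh\ instances, apply bound \eqref{priv.svt2} with the Gaussian RDP parameters and compose, and reduce everything to bounding $\mathbb{E}_{Z_1}\left[\mathbb{E}[T\mid Z_1]^\gamma\right]$ via a Hoeffding/geometric-tail analysis of the stopping time conditioned on the threshold noise (the paper obtains your dominating expression by rerunning its sample-size proof with $\sigma_1=0$ and $a,b$ shifted to $a+Z_A,b+Z_B$, after first checking $\Pr[T=\infty]=0$). The distribution-dependence caveat you flag at the end is genuine, and it is present in the paper's own argument as well.
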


\begin{corollary}\label{coro.priv}
	For $\sigma_1$ and $\sigma_2$ are chosen to be the parameters specified in the Gaussian mechanisms that satisfy $(\eps'/2, \delta)$-differential privacy, \PrivSprt$(X,f_1,f_2,-a,b,\sigma_1,\sigma_2,A)$ in Algorithm \ref{alg.privsprt} satisfies $(\alpha, (\frac{\alpha\gamma/(\gamma-1)-1}{\alpha-1}+1)\eps'+\frac{2\log \max\{T_A,T_B\}}{\alpha-1})$-RDP, for any $1<\alpha<\infty$.
\end{corollary}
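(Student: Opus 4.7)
The plan is to derive this corollary as a direct specialization of Theorem \ref{thm.priv}: once the Gaussian noise scales $\sigma_1,\sigma_2$ are calibrated to give $(\eps'/2,\delta)$-DP, I only need to translate each of the two Gaussian-RDP contributions appearing in the theorem into the common quantity $\eps'$, and then collect terms.

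First I would identify the two terms $\tfrac{2\alpha A^2}{\sigma_1^2}$ and $\tfrac{4\alpha A^2}{\sigma_2^2}$ inside the RDP bound of Theorem \ref{thm.priv} as the $(\alpha,\cdot)$-RDP parameters of $\mathcal{M}_1$ and $\mathcal{M}_2$ respectively. Since each $A$-truncated summand lies in $[-A,A]$, changing one data point shifts $\ell_t(A)$ by at most $2A$, so the sensitivity seen by $\mathcal{M}_1$ (applied to the thresholds) is $2A$, while by the factor-of-two amplification in the \GenAboveThresh\ analysis of Theorem \ref{thm.privsvt}, $\mathcal{M}_2$ effectively handles sensitivity $4A$. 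The two expressions then match the standard Gaussian-RDP formula $\alpha\Delta^2/(2\sigma^2)$ with $\Delta\in\{2A,4A\}$, confirming the identification.

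Next I would invoke Theorem \ref{thm.dpgaussian} to set $\sigma_1=\sqrt{2\log(1.25/\delta)}(2A)/(\eps'/2)$ and $\sigma_2=\sqrt{2\log(1.25/\delta)}(4A)/(\eps'/2)$, the standard Gaussian noise scales that certify $(\eps'/2,\delta)$-DP for each mechanism. Under this calibration the two Gaussian-RDP terms above are each bounded by $\eps'$ in the regime of interest, so substituting into the bound of Theorem \ref{thm.priv} yields
\[
\rho \;\leq\; \frac{\alpha\gamma/(\gamma-1)-1}{\alpha-1}\,\eps' \;+\; \eps' \;+\; \frac{2\log\max\{T_A,T_B\}}{\gamma(\alpha-1)},
\]
which rearranges to $\bigl(\tfrac{\alpha\gamma/(\gamma-1)-1}{\alpha-1}+1\bigr)\eps'+\tfrac{2\log\max\{T_A,T_B\}}{\gamma(\alpha-1)}$. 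The corollary's trailing log term (with denominator $\alpha-1$ instead of $\gamma(\alpha-1)$) is then obtained by the mild weakening $\tfrac{1}{\gamma(\alpha-1)}\le\tfrac{1}{\alpha-1}$, valid since $\gamma>1$. I do not anticipate any serious obstacle beyond careful bookkeeping of sensitivities and the DP-to-RDP conversion for the Gaussian mechanism; the real technical work already lives in Theorem \ref{thm.priv}, and the corollary is essentially a substitution.
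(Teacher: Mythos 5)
Your overall route is the right one, and it is the only derivation the paper itself could have in mind: the corollary is stated without proof, and it is meant to follow from Theorem \ref{thm.priv} by plugging in the Gaussian noise scales $\sigma_1^2=32\log(1.25/\delta)A^2/\eps'^2$ and $\sigma_2^2=128\log(1.25/\delta)A^2/\eps'^2$ (exactly the values the paper uses in its experiments) and then bounding each Gaussian--RDP contribution by $\eps'$. Your handling of the trailing term, replacing $\tfrac{1}{\gamma(\alpha-1)}$ by $\tfrac{1}{\alpha-1}$ via $\gamma>1$, is a legitimate weakening and is likely how the paper's $\tfrac{2\log\max\{T_A,T_B\}}{\alpha-1}$ should be read.

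There are, however, two points you should not gloss over. First, the step ``each Gaussian--RDP term is bounded by $\eps'$'' is not unconditional, yet the corollary claims the bound for \emph{any} $1<\alpha<\infty$. With your calibration, $\tfrac{2\alpha A^2}{\sigma_1^2}=\tfrac{\alpha\eps'^2}{16\log(1.25/\delta)}$, which is $\le\eps'$ only when $\alpha\eps'\le 16\log(1.25/\delta)$ (and similarly for the $\sigma_2$ term). Since the Gaussian RDP curve grows linearly in $\alpha$ while $\eps'$ is fixed, the inequality necessarily fails for large $\alpha$; your phrase ``in the regime of interest'' is carrying real weight and must be converted into an explicit hypothesis on $\alpha,\eps',\delta$ for the argument to close. (This looseness is arguably inherited from the paper, which asserts the corollary without any such condition, but a proof cannot leave it implicit.) Second, a bookkeeping slip: identifying $\tfrac{4\alpha A^2}{\sigma_2^2}$ with $\alpha\Delta^2/(2\sigma_2^2)$ at $\Delta=4A$ gives $8\alpha A^2/\sigma_2^2$, not $4\alpha A^2/\sigma_2^2$. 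The theorem's constants actually arise as twice the single-instance quantities $\eps_1(\alpha)=\alpha A^2/\sigma_1^2$ and $\eps_2(\alpha)=2\alpha A^2/\sigma_2^2$ used in the paper's proof of Theorem \ref{thm.priv} (one copy for each of the two parallel \GenAboveThresh\ instances), and these are themselves not fully consistent with the sensitivities $\Delta$ and $2\Delta$ announced in Theorem \ref{thm.privsvt}. Your final bound is unaffected in form, but the sensitivity accounting you give to justify the identification does not actually reproduce the theorem's constants and should be redone from the parallel-composition structure.
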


\newtext{Because we are using the Gaussian mechanism as the noise-adding mechanism, the dependence of the stopping time in the privacy guarantee is unavoidable. $T_A$ and $T_B$ in Theorem \ref{thm.priv} are the moments of the conditional expectation of the stopping time, which depend on the true underlying distribution that generated the data; $T_A$ is roughly $O((\frac{a+\sigma_2}{\mu_0})^\gamma)$, and similarly $T_B$ is roughly $O((\frac{b+\sigma_2}{\mu_1})^\gamma)$. }
Theorem \ref{thm.priv} and Corollary \ref{coro.priv} further imply an $(\eps,\delta)$-differential privacy bound for \PrivSprt\ by Theorem \ref{thm.rdptodp}. For $\delta<1/2\log\max\{T_A,T_B\}$, and $\sigma_1$ and $\sigma_2$ are chosen to be the parameters specified in the Gaussian mechanisms that satisfy $(\eps'/2, \delta)$-differential privacy,  \PrivSprt\ is $(\eps, \delta)$-differentially private, with $\eps= (\frac{\alpha\gamma/(\gamma-1)-1}{\alpha-1}+1)\eps'+4\log(1/\delta)/(\alpha-1)$.

\paragraph{Sample Size.}
When analyzing statistical properties of \PrivSprt, an important quantity is the expectation of the truncated individual log-likelihood ratios:
\begin{align*}
\mu_0=-\mathbb{E}_0[ \log \tfrac{f_1(x)}{f_0(x)}]^{A}_{-A} , \; \; \text{ and } \; \;
\mu_1=\mathbb{E}_1[ \log \tfrac{f_1(x)}{f_0(x)}]^{A}_{-A}.
\end{align*}
When $A$ goes to $\infty$, the above expectations converge to the KL-divergence between $f_0$ and $f_1$.

A technical challenge that arises in bounding the expected sample size is that the noisy log-likelihood ratio at time $t$ cannot be decomposed into a summation of $t$  i.i.d.~random variables \new{because of the noise terms.} This preludes the use of Wald's identity \cite{wald1944cumulative}, which is used in the proof of bounded sample size for non-private SPRT, and relates the expectation of a sum of randomly-many finite-mean, i.i.d.~random variables to the expected number of terms in the sum and the expectation of the random variables.

Instead, we leverage a critical fact that $\mathbb{E}_i[T]=\sum_{t=0}^\infty \Pr_i[T>t]$ for $i\in\{0,1\}$, and thus relate the expected sample size to the probability of the noisy truncated log-likelihood ratio being within the noisy thresholds at each time $t$. Since the event is less probable for a large $t$, we partition the range $[0, \infty)$ into several sub-intervals, and bound the probability in each sub-interval seperately. This results in our $O(\frac{b}{\mu_1})$ bound on the expected sample size in Theorem \ref{thm.sample} when the noise parameters $\sigma_1$ and $\sigma_2$ go to $0$. This result is consistent with the non-private sample size result $O(\frac{b}{D_{KL}})$, and it is first-order optimal.  We note that a similar idea of partitioning the whole range into sub-intervals also appears in \cite{liu2020improved}, where it was applied only for handling Gaussian data.

The last term in the bound of Theorem \ref{thm.sample} is the additional cost that comes from adding Gaussian noise, which quantifies the cost of privacy. In the proof,
we permit large values of the difference between the Gaussian noise $Z_t$ to $Z_b$ (or $Z_a$) for a large $t$, which reduces the additional expected sample size required for privacy. \new{The analysis relies on partitioning the range into k intervals and a time-specific threshold depending on a constant $c$, and the results are under the optimal choice of $k$ and $c$.}  The proof is given in Appendix \ref{app.ssproof}.


\begin{restatable}[Sample Size]{theorem}{samplesize}\label{thm.sample}
	The expected sample size of \PrivSprt$(X,f_1,f_2,-a,b,\sigma_1,\sigma_2,A)$ under $H_1$ satisfies
	$\mathbb{E}_1[T] \le 1 +\min_{k \in \mathbb{N}}\min_{c\in (0,1)}(\frac{b}{(1-c)\mu_1}+\frac{1}{2(k+1)}\frac{b}{(1-c)\mu_1}+(k+1)\rho_1^{-1}+ \frac{3\sqrt{2(\sigma_1^2+ \sigma_2^2)}}{4(1-c)\mu_1} ),$
	where $\rho_0=1-\exp(-\frac{(1-c)\mu_1^2}{2A^2})$.
	Similarly, the expected sample size under $H_0$ satisfies
	$\mathbb{E}_0[T] \le 1 +\min_{k \in \mathbb{N}}\min_{c \in (0,1)} (\frac{a}{(1-c)\mu_0}+\frac{1}{2(k+1)}\frac{a}{(1-c)\mu_0}+(k+1)\rho_0^{-1}+ \frac{3\sqrt{2(\sigma_1^2+\sigma_2^2)}}{4(1-c)\mu_0} ),$
	where $\rho_1=1-\exp(-\frac{(1-c)\mu_0^2}{2A^2})$.
\end{restatable}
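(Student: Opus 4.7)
The plan is to apply the tail-sum identity $\mathbb{E}_1[T]=\sum_{t=0}^{\infty}\Pr_1[T>t]$ and bound each tail probability by a Hoeffding-type argument after conditioning on the additive Gaussian noise. Since $\{T>t\}$ requires both stopping criteria to have failed at step $t$, in particular it implies $\hat\ell_t^b\le\hat b$, which rearranges to
\begin{equation*}
\ell_t(A)\le b+W_t, \qquad W_t:=Z_b-Z_t^b\sim\mathcal{N}(0,\sigma_1^2+\sigma_2^2).
\end{equation*}
Conditioning on $W_t=w$ and applying Hoeffding's inequality to the $t$ i.i.d.\ truncated summands in $\ell_t(A)$ (each bounded in $[-A,A]$ with mean $\mu_1$ under $H_1$) yields $\Pr_1[\ell_t(A)\le b+w]\le\exp\bigl(-(t\mu_1-b-w)_+^2/(2tA^2)\bigr)$. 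The bound on $\mathbb{E}_0[T]$ would follow by the symmetric argument starting from $\{T>t\}\subseteq\{\hat\ell_t^a\ge\widehat{-a}\}$, so I would focus below on $\mathbb{E}_1[T]$.

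Next comes the partitioning step, which is the core of the argument. I would fix $c\in(0,1)$ and $k\in\mathbb{N}$ and introduce a data-dependent threshold $t^\star(w):=(b+w^+)/((1-c)\mu_1)$, so that for $t\ge t^\star(w)$ the Hoeffding exponent is at least $(ct\mu_1)^2/(2tA^2)$. For $t<t^\star(w)$ I would use the trivial bound $\Pr_1[T>t]\le 1$, contributing $\mathbb{E}_W[t^\star(W)]=b/((1-c)\mu_1)+\mathbb{E}[W^+]/((1-c)\mu_1)$; bounding the half-normal mean $\mathbb{E}[W^+]$ by the loose upper estimate $\tfrac{3\sqrt{2(\sigma_1^2+\sigma_2^2)}}{4}$ then produces the $\sigma$-term in the final inequality. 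For $t\ge t^\star(w)$ I would partition $[t^\star(w),\infty)$ into $k+1$ sub-intervals whose widths are chosen so that the Hoeffding bound decays geometrically across each sub-interval with ratio $\exp\bigl(-(1-c)\mu_1^2/(2A^2)\bigr)$; summing the geometric series contributes at most $\rho^{-1}$ per sub-interval, giving $(k+1)\rho^{-1}$ in total, while the extra $1/(2(k+1))$ multiplier on $b/((1-c)\mu_1)$ reflects the slack accumulated at the sub-interval boundaries. Minimizing over $c$ and $k$ at the end recovers the stated $\min$.

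The main obstacle will be arranging the sub-interval structure so that the geometric ratio in the Hoeffding bound comes out to $\exp\bigl(-(1-c)\mu_1^2/(2A^2)\bigr)$ rather than the weaker $\exp\bigl(-c^2\mu_1^2/(2A^2)\bigr)$ that one would get from a single-pass bound; this requires the sub-interval widths to be $\mu_1^{-1}$-scale increments so that the Hoeffding exponent telescopes cleanly across sub-intervals, and it is also what forces the $1/(2(k+1))$ slack term into the bound. A secondary issue is the coupling of $W_t$ across $t$ through the shared threshold noise $Z_b$, but since the tail-sum decomposition depends only on the marginal distribution of each $W_t$, this correlation does not affect the analysis.
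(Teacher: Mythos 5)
Your skeleton --- the tail-sum identity $\mathbb{E}_1[T]=\sum_t\Pr_1[T>t]$, reducing $\{T>t\}$ to the single event $\hat\ell_t^b\le\hat b$, and Hoeffding on the truncated i.i.d.\ summands --- is exactly the paper's, but you handle the Gaussian noise differently, and this changes the shape of the final bound. The paper does \emph{not} condition on the noise: at each $t$ it union-bounds $\Pr_1[\ell_t+Z_t\le b+Z_b]\le \Pr_1[\ell_t-t\mu_1\le b-(1-c)t\mu_1]+\Pr[Z_b-Z_t\ge ct\mu_1]$, so the Hoeffding deviation has slope $(1-c)\mu_1$ in $t$ (hence $\rho$ built from $(1-c)\mu_1$; the theorem writes $(1-c)\mu_1^2$ while the proof uses $m^2=(1-c)^2\mu_1^2$), and the $\sigma$-term comes from summing Gaussian tails at levels $ct\mu_1$. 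Your conditional version keeps the full slope $\mu_1$ in the Hoeffding deviation and instead pays for the noise through $\mathbb{E}[W^+]$ inside the data-dependent cutoff $t^\star(W)$; both give the same $\frac{b}{(1-c)\mu_1}+O(\sigma/\mu_1)$ leading terms. Where the routes genuinely diverge is the residual sum past the cutoff. The paper's cutoff $\gamma=b/((1-c)\mu_1)$ sits exactly where the Hoeffding exponent vanishes, so Hoeffding is vacuous just above $\gamma$; that forces the probability-$\tfrac12$ bound on a first sub-interval of length $\gamma/(k+1)$ (whence $\frac{\gamma}{2(k+1)}$) and break-points $\frac{j+1}{j}\gamma,\frac32\gamma,2\gamma$ chosen so each geometric prefactor $\exp(-\frac{b^2}{2jA^2}+\frac{bm}{A^2}-\frac{m^2i}{2A^2})$ is at most $1$ (whence $(k+1)\rho^{-1}$). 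In your decomposition the exponent at $t\ge t^\star(w)$ is already at least $c^2\mu_1^2t/(2A^2)>0$, so a single geometric sum yields one term $\bigl(1-\exp(-c^2\mu_1^2/(2A^2))\bigr)^{-1}$ and you need neither the $k$-partition nor the $\frac{\gamma}{2(k+1)}$ slack. This is why your ``main obstacle'' paragraph does not resolve: no choice of sub-interval widths converts your natural ratio $\exp(-c^2\mu_1^2/(2A^2))$ into the paper's, and the $\frac{1}{2(k+1)}$ term does not arise from ``boundary slack'' in your setup --- it arises only in the paper's setup, from the region where Hoeffding gives nothing. The upshot: your route is sound and yields a same-order (arguably cleaner) bound, but with $c^2$ where the paper's has $(1-c)^2$ and without the $k$-dependent terms, so it does not literally reproduce the stated right-hand side; if you want the theorem verbatim, switch to the paper's union-bound split with $\delta_t=ct\mu_1$.
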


\newtext{To interpret the results in Theorem \ref{thm.sample}, we choose a specific (potentially suboptimal) values of $k$ and $c$. Choosing $k=1$ and $c=\frac{1}{2}$ gives $\mathbb{E}[T]\le 1+\rho_1^{-1}+\frac{5b}{2\mu_1}+\frac{3\sqrt{2(\sigma_1^2+\sigma_2^2)}}{2\mu_1}$, which is $O(\frac{b}{\mu_1})+O(\frac{\sqrt{(\sigma_1^2+\sigma_2^2)}}{\mu_1})$. The first term is the same as in the classical non-private results, and the second term is the additional cost for privacy. Since $\sigma_1$ and $\sigma_2$ will be chosen to scale with $\frac{A}{\epsilon'}$, the additional cost for privacy is $O(\frac{A}{\mu_1\epsilon})$, where $\mu_1$ is the expectation of the truncated log-likelihood ratios, which serves as a distance measure similar to the KL divergence. Our expected sample size and error rate results converge to the classical non-private results up to $O(1)$, ignoring the dependence on $\epsilon$. The asymptotic dependence on $\epsilon$ is $O(1/\epsilon)$, which matches the sample complexity dependence on $\epsilon$ in the simpler problem of private simple hypothesis testing \cite{canonne2019structure}. }


\paragraph{Error rates.}
We now move to provide guarantees for the Type I and Type II error rates of \PrivSprt. In the classical sequential hypothesis testing literature for non-private SPRT, the standard technique to characterize the error rates is based on the change of measure method that heavily utilizes the likelihood ratio statistics. Unfortunately, the test statistics of \PrivSprt\ are no longer the likelihood ratio, since the algorithm add Gaussian noise and truncates the log-likelihood for privacy. As a result, the standard change-of-measure technique is no longer applicable.


To characterize the error rates of \PrivSprt, we apply an alternative method based on the brute force estimation of the error probabilities, which was first proposed in \cite{sahu2015distributed} in the context of distributed hypothesis testing in sensor networks. It turns out that this alternative method is also applicable to the setting of \PrivSprt. The main idea is as follows: Type I error, $\Pr_0[d=1]$, can be written as a sum of probabilities of the noisy log-likelihood ratio being above the noisy threshold at time $t$ \new{and the event that the stopping time is $t$} for all $t>0$: $\sum_{t=1}^\infty \Pr_0[\ell_t(A)+Z_t>b+Z_b \; \land \; T=t]$.
We then partition the range of time $[1,\infty)$ into several sub-intervals and analyze them separately as before with the expected sample size. Although the high-level approach is similar to analyzing the expected sample size, the sub-intervals need to be carefully chosen here to give a meaningful bound for the error rates. The detailed proof is deferred to Appendix \ref{app.errorproof}.

\begin{restatable}[Error Rate]{theorem}{errorrate}\label{thm:errorrate}
	Let $d\in\{0,1\}$ be the decision output by \PrivSprt$(X,f_1,f_2,-a,b,\sigma_1,\sigma_2,A)$. Then the Type I error is bounded by:
	\begin{align}\label{error1}
	Pr_0[d=1]\le \min_{k\in\mathbb{N}} \min_{c\in(0,1)} \left\{ Q_1+Q_2+Q_3\right\},
	\end{align}
	where $Q_1=2\rho_0^{-1}\exp(-\frac{2b(1-c)\mu_0}{A^2}) (1+k\exp(\frac{1}{8k})$, $Q_2=k\exp(\frac{1}{4k+3})) $ and $Q_3=\frac{\sqrt{2(\sigma_1^2 +\sigma_2^2)}}{4(1-c)\mu_0} $, and $\rho_0=1-\exp(-\frac{(1-c)\mu_0^2}{2A^2})$. The Type II error is bounded by:
	\begin{equation}
	Pr_1[d=0]\le \min_{k\in\mathbb{N}} \min_{c\in(0,1)} \left\{ W_1+W_2+W_3\right\},
	\end{equation}	
	where $W_1=2\rho_1^{-1}\exp(-\frac{2a(1-c)\mu_1}{A^2})(1+k\exp(\frac{1}{8k})$, $W_2=k\exp(\frac{1}{4k+3}))$, and $W_3=\frac{\sqrt{2(\sigma_1^2+\sigma_2^2)}}{4(1-c)\mu_1}$, and
	 $\rho_1=1-\exp(-\frac{(1-c)\mu_1^2}{2A^2})$.
\end{restatable}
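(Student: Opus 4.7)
The plan is to bound each error by a union bound over time, following the brute-force method of \cite{sahu2015distributed} adapted to handle PrivSPRT's Gaussian noise. Since requiring the stopping time $T=t$ only shrinks the event, I would first drop that constraint and write
\[
\Pr_0[d=1] \le \sum_{t=1}^\infty \Pr_0\!\left[\ell_t(A) + Z_t^b > b + Z_b\right].
\]
Under $H_0$, the truncated increments $[\log(f_1(x_i)/f_0(x_i))]_{-A}^{A}$ are i.i.d.\ in $[-A,A]$ with mean $-\mu_0$, so $\ell_t(A)+t\mu_0$ is a sum of $t$ mean-zero bounded random variables. Rewriting the event as $(\ell_t(A)+t\mu_0) + (Z_t^b - Z_b) > b+t\mu_0$ and introducing a splitting parameter $c\in(0,1)$, a union bound gives
\[
\Pr_0[\ell_t(A) + Z_t^b > b+Z_b] \le \Pr_0[\ell_t(A)+t\mu_0 > (1-c)(b+t\mu_0)] + \Pr[Z_t^b - Z_b > c(b+t\mu_0)],
\]
after which Hoeffding's inequality controls the first summand and the standard Gaussian tail bound controls the second (using $Z_t^b - Z_b \sim \mathcal{N}(0, \sigma_1^2+\sigma_2^2)$).

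The heart of the argument is the partition of the time axis, mirroring the sample-size proof. I would identify the crossover time $t^\star \approx b/((1-c)\mu_0)$ at which the drift $t\mu_0$ matches the threshold $b$, and split $[1,\infty)$ into the tail $(t^\star,\infty)$ plus $k$ equal sub-intervals covering $[1,t^\star]$. On the tail, the Hoeffding exponent simplifies to roughly $-(1-c)\mu_0^2 t/(2A^2)$, so summing over $t$ yields a geometric series with ratio $\exp(-(1-c)\mu_0^2/(2A^2))$; this produces the factor $\rho_0^{-1}$, and the evaluation at the endpoint $t^\star$ supplies the leading exponential $\exp(-2b(1-c)\mu_0/A^2)$ appearing in $Q_1$. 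On each of the $k$ finite sub-intervals, the Chernoff bound evaluated at the sub-interval endpoint, combined with careful bookkeeping of the ratio $(b+t\mu_0)^2/t$, yields the remaining $k\exp(1/(8k))$ and $k\exp(1/(4k+3))$ terms that complete $Q_1$ and $Q_2$.

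For the Gaussian tail piece, I would sum $\exp(-c^2(b+t\mu_0)^2/(2(\sigma_1^2+\sigma_2^2)))$ across all $t$ and compare the sum to the corresponding Gaussian integral, which contributes $Q_3 = \sqrt{2(\sigma_1^2+\sigma_2^2)}/(4(1-c)\mu_0)$ after absorbing constants into the splitting. Taking the minimum over $k\in\mathbb{N}$ and $c\in(0,1)$ yields the stated Type I bound. The Type II bound follows by an entirely symmetric argument under $H_1$: the drift of $\ell_t(A)$ reverses to $+t\mu_1$, the threshold $b$ is replaced by $-a$, and the same Hoeffding-plus-Gaussian split and partitioning produce $W_1, W_2, W_3$.

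The main obstacle, as in the sample-size proof, will be choosing the partition endpoints and the allocation parameter $c$ so that the Hoeffding contributions and Gaussian tail contributions combine into the precise closed form stated, without inflating constants. A secondary subtlety is ensuring the Hoeffding bound is applied to a centered sum (requiring the $+t\mu_0$ shift), which is why the split at $(1-c)(b+t\mu_0)$ rather than $(1-c)b$ is essential — otherwise the bound would not decay for $t$ close to $t^\star$.
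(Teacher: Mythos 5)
Your proposal follows essentially the same route as the paper's proof: drop the event $\{T=t\}$ and union-bound over all $t$, split each term into a Hoeffding bound on the centered truncated log-likelihood sum and a Gaussian tail bound on $Z_t-Z_b$, partition time around $t^\star = b/((1-c)\mu_0)$ (where the exponent $(b+(1-c)\mu_0 t)^2/(2tA^2)$ is minimized, giving the leading factor $\exp(-2b(1-c)\mu_0/A^2)$) into $k$ sub-intervals plus geometric tails contributing $\rho_0^{-1}$, and compare the Gaussian tail sum to an integral to get $Q_3$. The one deviation is your slack allocation $c(b+t\mu_0)$ in place of the paper's $\delta_t = ct\mu_0$, which would put $(1-c)^2$ rather than $(1-c)$ into the leading exponent and into $\rho_0$ --- a constant-level reparametrization that is harmless given the minimum over $c$ but that you would need to adjust to land on the exact stated closed form.
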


To interpret the results, in Theorem \ref{thm:errorrate},
\newtext{choosing $k=1$ and $c=1-\frac{A^2}{2\mu_0}$ gives $\Pr_0[d=1]\le 2\rho_0^{-1}(1+\exp(\frac{1}{8}))\exp(-b)+\frac{\sqrt{2(\sigma_1^2+\sigma_2^2)}}{2A^2}+\exp(\frac{1}{7})$.}
 Again, the first term is the same as the non-private result $O(\exp(-b))$. The additional $O(\frac{\sqrt{\sigma_1^2+\sigma_2^2}}{A^2})$ term quantifies the cost of privacy.
Since we are instantiating the Gaussian mechanisms with noise parameters $\sigma_1$ and $\sigma_2$ proportional to the sensitivity $2A$ and the privacy parameter $\eps$, the additional error term is reduced to $O(\frac{1}{\eps A})$. This implies the algorithm will incur a larger error rate for stronger privacy guarantees.

\section{Numerical Results}\label{sec:exp}
In this section, we present results from Monte Carlo experiments designed to validate the theoretical results of \PrivSprt.  We only need to validate the statistical properties of \PrivSprt --- sample size and error rates --- since the privacy guarantee holds even in the worst-case over databases and hypotheses.  In Section \ref{exp.bernoulli}, we focus on sequentially testing means of Bernoulli distributions; in Appendix \ref{exp.gaussian}, we provide additional empirical results on testing means of Gaussian distributions. In Appendix \ref{exp.above}, we demonstrate empirically that the classic \AboveThresh\ mechanism does not provide satisfactory performance in terms of sample size and error rates, thus justifying our algorithmic modifications made in \PrivSprt.

\subsection{Testing on Bernoulli Data}\label{exp.bernoulli}

In this section, our experiment focus on Bernoulli data, where $x_1, x_2, \cdots \sim Ber(\theta)$ are sampled
i.i.d. from a Bernoulli distribution with parameter $\theta.$ Monitoring Bernoulli data is one of
the early research in the fully sequential design in clinical trials, see \cite{Arm50}. For
instance, one want to evaluate the effect of a new drug or treatment on the
mortality rate of an unknown infectious disease such as COVID-19 in a sub-population of
groups.

Here we consider two different scenarios that are simple yet useful to shed new lights on real-world applications. One is when the distance between the null hypothesis and the alternative
hypothesis on $\theta$ is large, say, $H_0: \theta = 0.7$ against $H_1: \theta = 0.3,$ e.g., the
effect of a new treatment is expected to be significant to reduce the mortality rate among people
whose age is 65 years or older in a developing country. The other is when the distance between
the null hypothesis and the alternative hypothesis on $\theta$ is small, say, $H_0: \theta = 0.6$
against $H_1: \theta = 0.4$, e.g., the effect of a drug to certain age group with certain diseases
in a developed country. Since $\mu_0=\mu_1$ under this setting, the expected sample sizes under $H_1$ and $H_0$ are identical, and similarly, the Type I error and Type II error are also identical. For simplicity, we will use $E[T]$ and error to denote the expected sample size and the error, respectively.

To obtain an accurate estimate of Type I and Type II errors, we use the importance sampling technique for the Monte Carlo simulations. This is because the estimate of the Type I error based on $n$ independent trials is $n^{-1}\sum_{k=1}^n \frac{f_0(X[1:T])}{f_1(X[1:T])}I(\ell_T(A)+Z_T\ge b+Z_b)$ where the sample $X$ is generated from $f_1$ has much smaller variance compared to the  naive estimate $n^{-1}\sum_{k=1}^n I(\ell_T(A)+Z_T\ge b+Z_b)$ where the sample $X$ is generated from $f_0$.

 We use two $(\eps'/2,\delta=1e-05)$-differentially private Gaussian mechanisms as the noise-adding mechanisms in \PrivSprt, corresponding to $\sigma^2_1=32\log(1.25/\delta)A^2/\eps^2$ and $\sigma^2_2=128\log(1.25/\delta)A^2/\eps^2$.
 Although the log-likelihood ratio is uniformly bounded for Bernoulli data, we invoke the truncation with parameter $A$ because $\mu_0$ and $\mu_1$ are linear with respect to $A$ for Bernoulli data, which makes the validation easier. For each simulation, we repeat the process for $10^5$ times.
 The results are presented in Figure \ref{fig:plot}, which plots the expected sample size $E[T]$ against the log scale of $1/\text{error}$, with varying the privacy parameter $\eps'=0.5, 1, 2$. From this figure, when we want to provide a stronger privacy, i.e., when $\epsilon$ becomes smaller, then we will have larger expected sample sizes for given Type I and Type II error probabilities constraints. This is consistent with our intuition on the tradeoff between privacy and statistical efficiency. 
 
 We also conduct experiments for testing $H_0: \theta = 0.7$ against $H_1: \theta = 0.2,$ when $E_1[T]$ and $E_2[T]$ are not symmetric. We vary this truncation parameter $A=0.05, 0.2, 0.5, 0.7$ in our experiments. For each fixed $A$ and $\eps$, we choose thresholds $a$ and $b$ through Monte Carlo simulation to control the Type I error and Type II error at the same level ($10^{-6}$).The results of these simulations are presented in Table \ref{tbl:bernoulli}. 

 \vspace{-12mm}
 \begin{center}
 	\begin{minipage}{\linewidth}
 		\begin{figure}[H]
 			\centering
 			\subfloat[][Large distance]{\includegraphics[width=.45\textwidth]{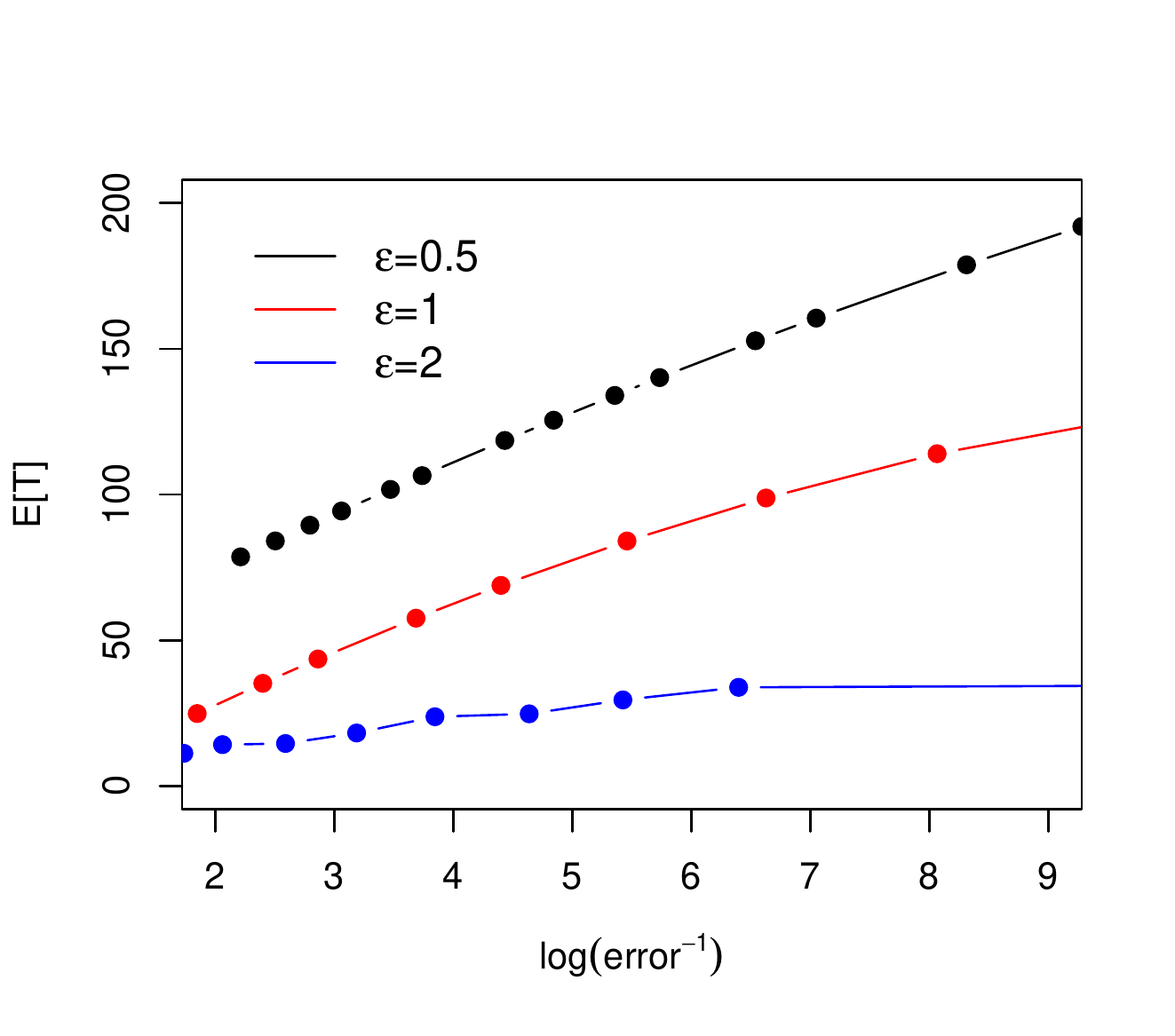}}
 			\subfloat[][Small distance]{\includegraphics[width=.45\textwidth]{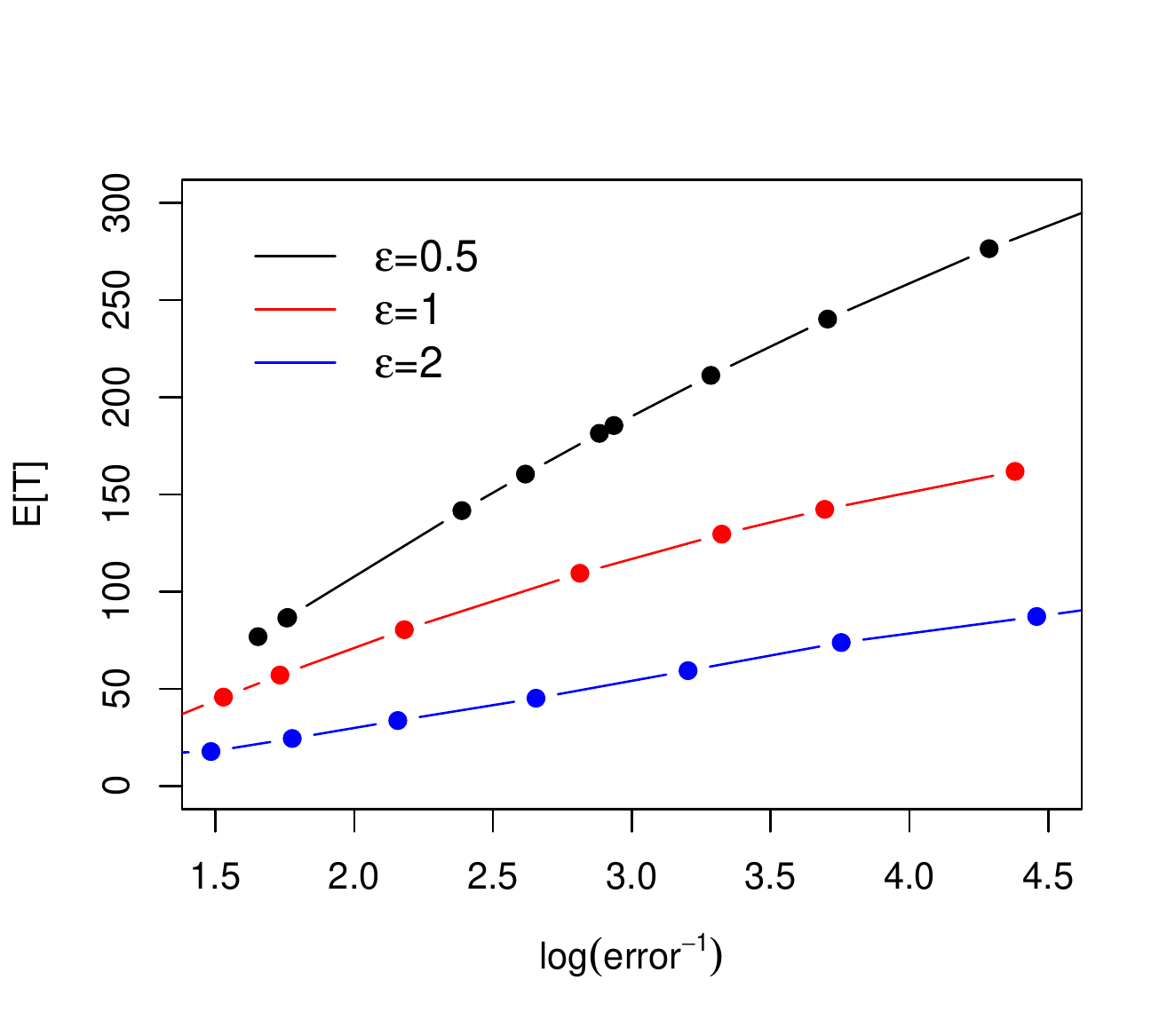}}
 			\caption{\small Three-way trade-off between privacy, expected sample size, and error rate. For large distance (left), we are testing $H_0: \theta = 0.7$ against $H_1: \theta = 0.3$; for small distance (right), we are testing $H_0: \theta = 0.6$
 				against $H_1: \theta = 0.4$.
 			}
 			\label{fig:plot}
 		\end{figure}
 	\end{minipage}
 \end{center}


\begin{table}[h]
	\caption{\small Numerical values of expected sample size under $H_0$ and $H_1$, Type I error and Type II error for testing the Bernoulli parameter. }
	\label{tbl:bernoulli}
	\centering
	\small
	\begin{tabular}{@{}l |l | l | l ||  l |  l  @{}}
		\toprule
		$A$ & $\eps'$ & $a$, $b$ &  error rates & $\mathbb{E}_0[T]$ & $\mathbb{E}_1[T]$ 
		\\ \midrule \midrule
		
		\multirow{3}{*} { 0.05}     &0.5    & 8,7.5  & \multirow{12}{*} {$10^{-6}$}       &   139.662               &   172.89  \\        &1 & 4.3, 4.3 &    & 86.12 & 122.144    \\  &2 & 2.5, 2.5 & &  61.683 & 88.307\\\cmidrule(r){1-3}\cmidrule(r){5-6}
		\multirow{3}{*} { 0.2}     & 0.5 & 32,32 & & 139.456 & 195.504 \\
		& 1 & 16.8, 16.8 & &85.336 & 123.542 \\
		& 2 &9.5,9.5 & & 56.645 & 83.127  \\ \cmidrule(r){1-3}\cmidrule(r){5-6}
		\multirow{3}{*} { 0.5}  & 0.5 & 80,80 & &139.252 & 199.718 \\
		& 1 & 43, 43 & & 88.137&127.986 \\
		& 2 &  25, 25 & & 61.494 & 88.182 \\ \cmidrule(r){1-3}\cmidrule(r){5-6}
		\multirow{3}{*} { 0.7}  & 0.5&  125,120 &  & 173.305 & 227.387 \\
		& 1 &  63,63&  & 95.304 & 136.336\\
		& 2 & 35,35 & & 61.944 & 87.363\\ \cmidrule(r){1-3}\cmidrule(r){5-6}
		& $\infty$ & 16, 16 & & 29.607 & 28.318  \\
		
		\bottomrule
	\end{tabular}
	\vspace{0.1cm}
\end{table}

%
%

Table \ref{tbl:bernoulli} shows three positive results. First, for each fixed privacy parameter $\eps'$, the expected sample sizes are almost the same across varying $A$, and the thresholds are almost linear with respect to $A$. This suggests that the expected sample size $\mathbb{E}_0[T]$ (resp. $\mathbb{E}_1[T]$) is proportional to $a/A$ or $A/a$ (resp. $b/A$ or $A/b$). The parameter $A$ controls a trade-off between how much information is lost from truncation in the log-likelihood ratios and how much noise is added for privacy. Thus expected sample sizes are larger for a larger $A=0.7$ with $\eps'=0.5, 1$, as the additional noise starts to dominate the information provided by the log-likelihood ratios. Second, in our setting, the expectation of the truncated log-likehood ratio $\mu_0=0.6A$ and $\mu_1=0.4A$. We see from Table \ref{tbl:bernoulli} that $\mathbb{E}_0[T]/\mathbb{E}_1[T]$ is roughly $2/3$ for all the cases, which further validates Theorem \ref{thm.sample} that $\mathbb{E}_0[T]$ (resp. $\mathbb{E}_1[T]$) is  $O(a/A)$  (resp. $O(b/A)$).
Third, for each fixed $A$, $a/\mathbb{E}_0[T]$ (resp. $b/\mathbb{E}_1[T]$) decreases as $\eps'$ increases for weaker privacy, which is consistent with Theorem \ref{thm.sample}, because the additional cost does not involve the threshold, and it decreases for weaker privacy.

%

\subsection{Testing on Gaussian Data}\label{exp.gaussian}
In this section, our experiments focus on testing means of Gaussian data, where $x_1, x_2, \ldots \sim N(\mu, 1)$ are sampled i.i.d. from a Gaussian distribution with mean $\mu$. We again consider two different scenarios: large distance between the null and alternative hypotheses on $\mu$ corresponding to $H_0: \mu=0$ against $H_1: \mu=2$, and a small distance between the null and alternative hypotheses on $\mu$ corresponding to $H_0: \mu=0$ against $H_1: \mu=1$. We will denote the expected sample size as $\mathbb{E}[T]$ since $\mathbb{E}_0[T]$ and $\mathbb{E}_1[T]$ are identical for Gaussian data, and similarly, we denote the Type I error and Type II error as errors.
We again use two $(\eps'/2,\delta=1e-05)$-differentially private Gaussian mechanisms as the noise-adding mechanisms. 

In Figure \ref{fig:gauss}, we plot the expected sample size $\mathbb{E}[T]$ against the log scale of $1/$error, and we vary the privacy parameter $\eps'=0.5,1,2$. This experiment is conducted under the setting where we fix the truncation threshold $A=0.5$ and vary the decision threshold $a, b$. For each simulation, we repeat the process $10^5$ times and report average performance. As in the case with Bernoulli data, we see that we experience a larger expected sample size for a given Type I and Type II error constraint as $\eps$ decreases. Additionally, we need fewer samples to distinguish $\mu$ for a large distance regime (Left vs. Right, note the different scales on the y-axes).

\begin{center}
	\begin{minipage}{\linewidth}
		\begin{figure}[H]
			\centering
			\subfloat[][Large distance]{\includegraphics[width=.5\textwidth]{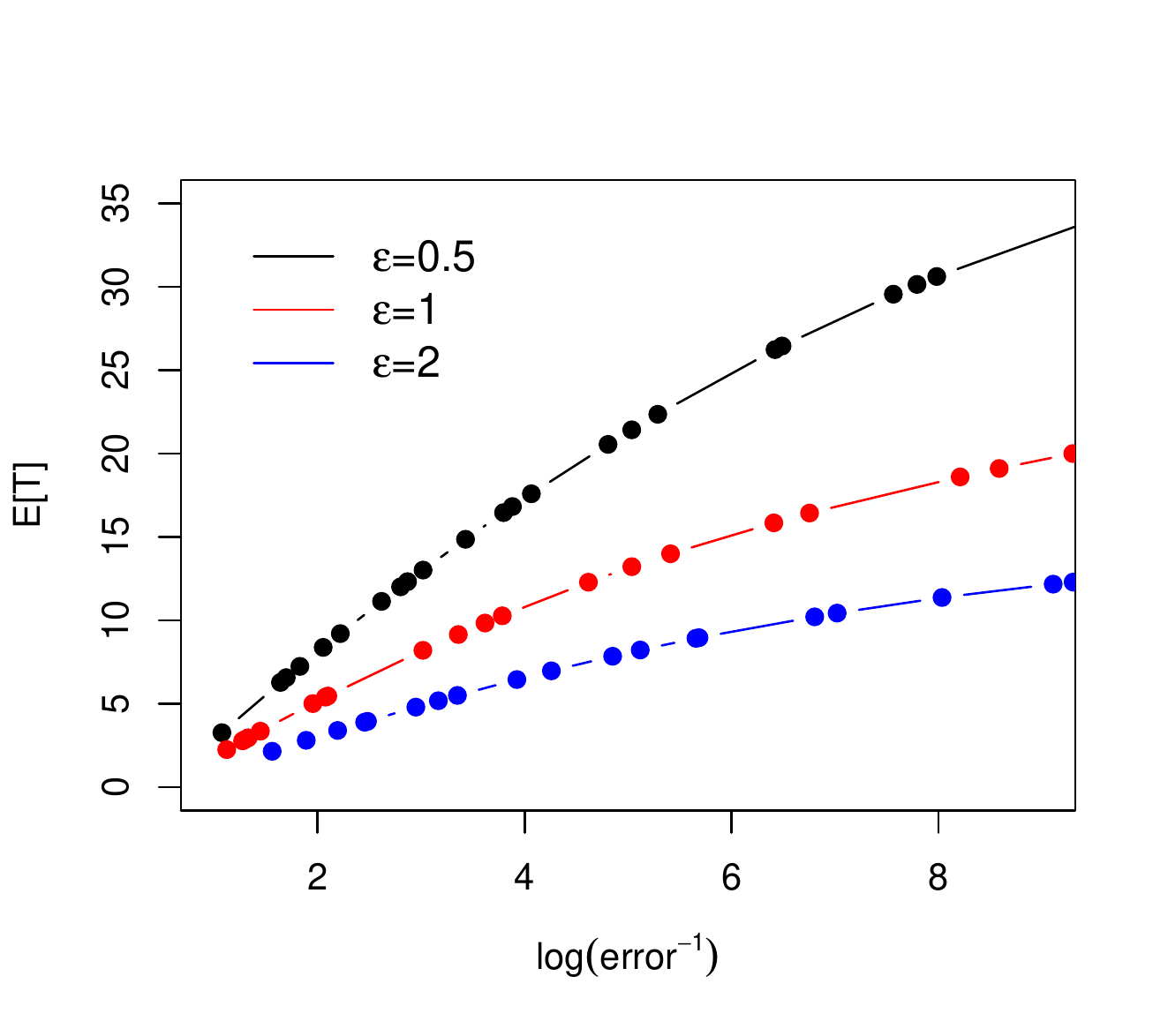}}
			\subfloat[][Small distance]{\includegraphics[width=.5\textwidth]{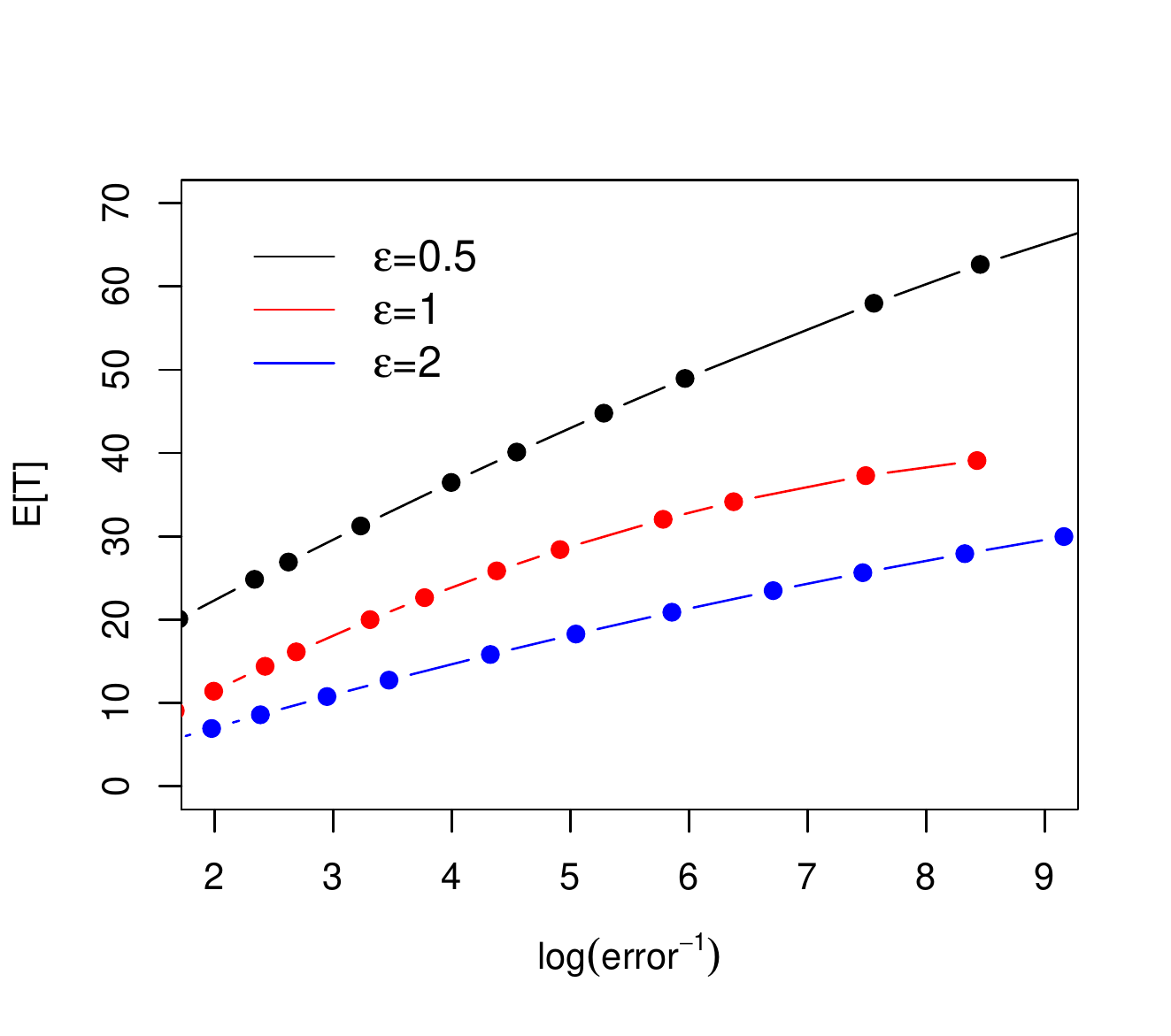}}
			\caption{\small Three-way trade-off between privacy, expected sample size, and error rate. For large distance (left), we are testing $H_0: \mu = 0$ against $H_1: \mu = 2$; for small distance (right), we are testing $H_0: \mu = 0$
				against $H_1: \mu = 1$.
			} 
			\label{fig:gauss}
		\end{figure}
	\end{minipage}
\end{center}

We again conduct experiments to further validate our theoretical results empirically in the Gaussian setting. In Table \ref{tbl:compare} (left),
we vary the truncation parameter $A=0.5,1,2,5$ and the privacy parameter $\eps'=0.5,1,2,\infty$. For each fixed $A$ and $\eps'$, we choose thresholds $a,b$ through Monte Carlo simulation with importance sampling to control Type I and Type II errors at the $0.05$ level. Similar to the results for testing the Bernoulli parameter, the thresholds are almost linear with respect to $A$. The expected sample sizes $\mathbb{E}_0[T]$ and $\mathbb{E}_1[T]$ are almost the same for all $A=0.5, 1, 2$, and $\mathbb{E}_0[T]$ and $\mathbb{E}_1[T]$ increase for a larger $A=5$, because the noise added for privacy dominates the information provided by the log-likelihood ratios. This suggests that a relatively small $A$ is preferred, and as long as $A$ is not too large, it has little impact on the performance. Moreover, we observe that $a/\mathbb{E}_0[T]$ (resp. $b/\mathbb{E}_1[T]$) decreases as $\eps'$ increases for weaker privacy, as the additional cost term in Theorem \ref{thm.sample} decreases for less noise.

%
%
%
%
%

\subsection{Using the standard \AboveThresh.}\label{exp.above}

To compare against the performance of our \PrivSprt, we also conduct experiments for testing means of Gaussian data using the original \AboveThresh\ algorithm with Laplace noise that satisfies $\eps/2$-differential privacy. We now vary the truncation parameter $A=0.5, 1,2,5$, and choose the thresholds such that the Type I and Type II error are below $0.05$. The results are presented in Table \ref{tbl:compare} (right). Table \ref{tbl:compare} shows that using the original \AboveThresh\ algorithm with Laplace noise results in much larger expected sample sizes, given that the Type I and Type II errors are fixed at the $0.05$ level. We note that although the overall privacy cost for \PrivSprt\ is slightly larger, \PrivSprt\ provides a better trade-off between privacy and accuracy.

%
%
%

We also empirically study the overshoot property when adding Laplace noise. We again consider testing $H_0: \theta=0.7$ against $H_1: \theta=0.2$ for Bernoulli data. We choose this setting because $\mu_0\neq\mu_1$, to have a comprehensive view of $\mathbb{E}_0[T]$ and $\mathbb{E}_1[T]$, and the Type I and Type II errors.
We now fix the truncation parameter $A=0.5$, and vary the privacy parameter $\eps=0.5, 1, 2$ and the thresholds $a,b=10,20,40$. The results are presented in Table \ref{tbl:lap}.

\begin{table}[h]
	\caption{Numerical values of expected sample size $\mathbb{E}[T]$, error rates for testing the Gaussian mean using \PrivSprt (left) and the original \AboveThresh\ with Laplace noise (right). The thresholds $a,b$ are chosen to control Type I and Type II error at $0.05$ (within the Monte Carlo simulation errors).}
	\label{tbl:compare}
	\centering
	
	\begin{tabular}{ @{}l |l | l |  l |  l  || l |l | l | l |  l  @{}}
		\toprule
		\multicolumn{5}{c}{\PrivSprt with Gaussian noise} \vline\vline& \multicolumn{5}{c}{\AboveThresh\ with Laplace noise}\\ 
		\toprule
		$A$ & $\eps'$ & $a=b$ & error rates & $\mathbb{E}[T]$ & $A$ & $\eps$ & $a=b$ & error rates &$\mathbb{E}[T]$ 
		\\ \midrule \midrule
		
		\multirow{3}{*} { 0.5}     &0.5   & 9 & \multirow{12}{*} {0.05}          &   12.547           &\multirow{3}{*} { 0.5}     &0.5         & 28  & \multirow{12}{*} {0.05}        &   22.821          \\        &1 & 4 &  & 7.298 & &1 & 12& & 12.621  \\  &2  &2.1& &  4.890 & &2 & 6.5& &  10.482 \\\cmidrule(r){1-3} \cmidrule(r){5-8} \cmidrule(r){10-10}
		\multirow{3}{*} { 1}     & 0.5 & 18 &  & 12.485 &\multirow{3}{*} { 1}     & 0.5 & 59& & 26.032  \\
		& 1 & 8.2 &  & 7.367  && 1 & 29 & & 17.165  \\
		& 2 &4 & & 4.792 && 2 & 15 & & 13.291  \\ \cmidrule(r){1-3} \cmidrule(r){5-8} \cmidrule(r){10-10}
		\multirow{3}{*} { 2}  & 0.5 & 36 &  & 13.333 & \multirow{3}{*} { 2} & 0.5 & 112 & & 23.581   \\
		& 1 & 16 & & 7.460 && 1 & 52 && 15.438 \\
		& 2 &  8 & & 5.000 && 2 & 28 && 12.564  \\ \cmidrule(r){1-3} \cmidrule(r){5-8} \cmidrule(r){10-10}
		\multirow{3}{*} { 5}  & 0.5 & 90 & & 16.943 & \multirow{3}{*} { 5} & 0.5 & 270 && 24.426  \\
		& 1  & 40 & & 10.156  && 1 &140 &&21.067 \\
		& 2 & 98 & & 6.190 && 2 & 70 && 15.872 \\ \cmidrule(r){1-3} \cmidrule(r){5-8} \cmidrule(r){10-10}
		
		& $\infty$ & 2 & & 1.793  \\
		
		\bottomrule
	\end{tabular}
	\vspace{0.1cm}
\end{table}

%
%
%
%

On the theoretical side, we should expect the expected sample size to be $O(b/\mu_1)$ for non-private SPRT. However, we see from Table \ref{tbl:lap} that the expected sample sizes are nonlinear with respect to the thresholds for strong privacy ($\eps=0.5, 1$), which is no longer consistent with the CLT theorem for non-private SPRT.  In contrast, we observe from Table \ref{tbl:bernoulli} that $\mathbb{E}_0[T]$ (resp. $\mathbb{E}_1[T]$) is  $O(a/\mu_0)$  (resp. $O(b/\mu_1)$) in Section \ref{sec:exp} when adding Gaussian noise in \PrivSprt. Intuitively, it appears that the overshoot analysis when adding Laplace noise relies heavily on the additional noise, rather than the statistical information provided by log-likelihood ratios. \newtext{Characterizing the relevant statistical properties when adding Laplace noise requires new tools, which we leave as future work for the privacy and statistics communities.} 

\begin{table}[h]
	\caption{Numerical values of expected sample sizes $\mathbb{E}_0[T]$ and $\mathbb{E}_1[T]$, Type I error and Type II error for testing Bernoulli parameter using the original \AboveThresh\ algorithm with Laplace noise. }
	\label{tbl:lap}
	\centering
	
	\begin{tabular}{@{}l |l | l | l |  l |  l  @{}}
		\toprule
		$a=b$ & $\eps$ & Type I  & Type II & $\mathbb{E}_0[T]$ & $\mathbb{E}_1[T]$
		\\ \midrule \midrule
		
		\multirow{3}{*} { 10}     &0.5        & 0.3634 & 0.3562     &   3.537               &   3.762                  \\        &1  & 0.2184 & 0.3132 & 9.246 & 10.399     \\  &2 & 0.0181 & 0.2185 & 22.317 & 29.035 \\\midrule
		\multirow{3}{*} { 20}     & 0.5  & 0.2577 & 0.1750 & 11.824 & 11.62\\
		& 1  & 0.0235 & 0.0140 &35.353 & 43.121 \\
		& 2 & 1.03e-05 & 3.53e-05 & 55.136 & 77.450\\ \midrule
		\multirow{3}{*} { 40}  & 0.5  & 0.0164 & 0.0266 &51.257 & 66.529\\
		& 1 &8.11e-08& 2.4e-04 & 99.026&144.114 \\
		& 2  & 1.04e-20 & 3.79e-19 & 121.27& 179.172\\
		
		\bottomrule
	\end{tabular}
\end{table}


\bibliography{ref,biblio}
\bibliographystyle{alpha}

\newpage

\appendix

\section{Omitted Proofs}\label{proof}

In this appendix, we provide proofs for our main theorems, which were omitted in the main body of the paper. We restate the theorems here for convenience.

\subsection{Proof of privacy}\label{app.privacy}

\privacy*

\begin{proof}
	We first show that the expectation of the stopping time $T$ is bounded given $Z_A$ and $Z_B$. We instead show the equivalent fact that $P_i(T=\infty)=0$ for $i=0,1$. Define a constant $d=a+b$. If $T=\infty$, then for any positive integer $r$, the following inequalities must hold:
	\begin{equation}\label{ineq1}
	(\sum_{i=kr+1}^{(k+1)r} [\log\frac{f_1(x_i)}{f_0(x_i)}]^A_{-A} + Z)^2<d^2  \quad  k=0, 1, 2, \ldots,
	\end{equation}
	where $Z\sim N(0,\sigma_2^2)$. We can further express $Z$ as a summation of $r$ independent Gaussians $\sum_{i=1}^r Z_i$, and then \eqref{ineq1} is equivalent to 
	\begin{equation}\label{ineq2}
	(\sum_{i=kr+1}^{(k+1)r} ([\log\frac{f_1(x_i)}{f_0(x_i)}]^A_{-A} + Z_i))^2<d^2  \quad  k=0, 1, 2, \ldots.
	\end{equation}
	To prove $P_i(T=\infty)=0$ for $i=0,1$, it is sufficient to show that the probability is zero that \eqref{ineq2} holds for all integer values of $k$. Since the variance of $[\log\frac{f_1(x_i)}{f_0(x_i)}]^A_{-A} + Z_i$ is not zero, and it is bounded below by the variance of $[\log\frac{f_1(x_i)}{f_0(x_i)}]^A_{-A}$, the expected value of $(\sum_{i=1}^j ([\log\frac{f_1(x_i)}{f_0(x_i)}]^A_{-A} + Z_i))^2$ converges to $\infty$ as $j$ goes to $\infty$. Therefore, there exists a positive integer $r$ such that
	\begin{equation}\label{ineq3}
	P[(\sum_{i=1}^j ([\log\frac{f_1(x_i)}{f_0(x_i)}]^A_{-A} + Z_i))^2<r^2]<1.
	\end{equation}
	From \eqref{ineq3} it follows that the probability that \eqref{ineq2} is fulfilled for all values of $k$ up to $\infty$ is equal to zero (using a union bound over all $k$), and thus $P_i(T=\infty)=0$ for $i=0,1$. Hence, $E_i[T| Z_A, Z_b]$ is bounded, and then $E_i[T| Z_A, Z_b]^\gamma$ is bounded. We use the same method to compute the upper bound of $E_i[T| Z_A, Z_b]$ as in the proof in \ref{app.ssproof} with $\sigma_1=0$, and $a$ and $b$ replaced by $a+Z_A$ and $b+Z_B$, respectively. For $i=0, 1$, we denote $E_i[T| Z_A, Z_b]^\gamma$ as $T_A$ and $T_B$, respectively. Since we consider the worst case for privacy, we take the maximum over $T_A$ and $T_B$ in the final bound.
	For Gaussian mechanisms, $\eps_1(\frac{\gamma}{\gamma-1}\alpha)=\frac{\gamma\alpha A^2}{(\gamma-1)\sigma_1^2}$ and $\eps_2(\alpha)=\frac{2\alpha A^2}{\sigma_2^2}$. 
	From inequality \eqref{priv.svt2} in Theorem \ref{thm.privsvt}, it follows that using \GenAboveThresh for truncated log-likelihood ratio queries satisfies $(\frac{\alpha\gamma/(\gamma-1)-1}{\alpha-1} \frac{\alpha A^2}{\sigma_1^2}+\frac{2\alpha A^2}{\sigma_2^2}+\frac{\log \max\{T_A,T_B\}}{\gamma(\alpha-1)})$-RDP, for any $1<\alpha<\infty$.
Then privacy of \PrivSprt\ follows from composition of two parallel instantiations of Algorithm \ref{alg.svt}.  
\end{proof}

\subsection{Proof of sample size}\label{app.ssproof}

\samplesize*
\begin{proof}
	In the proof,  we leverage a critical fact that $\mathbb{E}_i[T]=\sum_{t=0}^\infty \Pr_i(T>t)$ for $i\in\{0,1\}$, and thus relate the expected sample size to the probability of the noisy truncated log-likelihood ratio within the noisy thresholds for each time $t$. Since the event is less probable for a large $t$, we partition the range $[0, \infty)$ into several sub-intervals, and bound the probability in each sub-interval seperately.
	We provide the detailed proof for $E_1[T]$, the proof is the same for $E_0[T]$ with $b$ replaced by $a$ and $\mu_1$ replaced by $\mu_0$.
	\begin{align}
	E_1[T]&=\sum_{t=0}^\infty P_1(T>t) \notag\\
	&\le \sum_{t=0}^\infty P_1(\ell_t+Z_t\le b+Z_b) \notag \\
	&\le \sum_{t=0}^\infty  P_1(\ell_t-t\mu_1\le b-t\mu_1+\delta_t)+P_1(Z_t\le Z_b-\delta_t) \label{ineq.sep}
	\end{align}
	We will bound the first term in \eqref{ineq.sep} as follows. Let $\delta_t=ct\mu_1$, where $c$ is a constant within $(0,1)$. 
	\begin{align}
	&\sum_{t=0}^\infty  P_1(\ell_t-t\mu_1\le b-t\mu_1+\delta_t) \notag\\
	= & \sum_{t=0}^\infty  P_1(\ell_t-t\mu_1\le b-(1-c)t\mu_1) \label{eq.sum} 
	\end{align}
	Let $\gamma$ denote $\frac{b}{(1-c)\mu_1}$, and $m$ denote $(1-c)\mu_1$. We bound the infinite sum in \eqref{eq.sum} by partitioning $[0,\infty]$ into four sub-intervals:
	$$[0, \gamma], \quad (\gamma, \frac{3}{2}\gamma], \quad (\frac{3}{2}\gamma, 2\gamma], \quad (2\gamma, \infty).$$
	Let $S_1, S_2, S_3, S_4$ respectively denote the summation value as the index $t$ ranges over these sub-intervals. 
	When $t\in [0, \gamma]$, we have $b-(1-c)t\mu_1>0$. Since $\ell_t-t\mu_1$ is a mean-zero random variable, we bound $S_1$ by
	\begin{align}
	S_1&=\sum_{t=1}^{[\gamma]} P_1(\ell_t-t\mu_1 \le b-(1-c)t\mu_1) \notag\\
	&\le \sum_{t=1}^{[\gamma]} 1 \le \gamma+1 \label{eq.s1}.
	\end{align}
	When $t>\gamma$, following Hoeffding inequality, we have $P_1(\ell_t-t\mu_1\le b-(1-c)t\mu_1)\le \exp(-\frac{(b-mt)^2}{2tA^2}).$ We will use the following observation as the main tool. For any $i$ and $j$ with $i<j$, we have 
	\begin{align}
	&\sum_{i}^j \exp(-\frac{(b-mt)^2}{2tA^2}) \notag \\
	\le & \sum_{i}^j \exp(-\frac{b^2}{2jA^2}+\frac{bm}{A^2}-\frac{m^2t}{2A^2}) \notag \\
	=& \exp(-\frac{b^2}{2jA^2}+\frac{bm}{A^2}) \sum_{i}^j \exp(-\frac{m^2t}{2A^2}) \notag \\
	=&\exp(-\frac{b^2}{2jA^2}+\frac{bm}{A^2}) \frac{\exp(-\frac{m^2i}{2A^2})-\exp(-\frac{m^2(j+1)}{2A^2})}{1-\exp(-\frac{m^2}{2A^2})} \notag\\
	\le & \rho^{-1}\exp(-\frac{b^2}{2jA^2}+\frac{bm}{A^2})\exp(-\frac{m^2i}{2A^2}) ,\label{eq.tool}
	\end{align}
	where $\rho=1-\exp(-\frac{m^2}{2A^2})$.
	By applying \eqref{eq.tool} to the case where $i=\frac{3}{2}\gamma$ and $j=2\gamma$, we obtain a bound on 
	\begin{equation}\label{eq.s3}
	S_3\le \rho^{-1}.  
	\end{equation}
	Similarly, by applying \eqref{eq.tool} to the case where $i=2\gamma$ and $j=\infty$, we obtain a bound on
	\begin{equation}\label{eq.s4}
	S_4\le \rho^{-1}. 
	\end{equation}
	To bound $S_2$, we further partition the sub-interval $(\gamma, \frac{3}{2}\gamma]$ into $k$ intervals:
	$$(\gamma, \frac{k+2}{k+1}\gamma], \quad \text{and} \quad (\frac{j+2}{j+1}\gamma, \frac{j+1}{j}\gamma],  \text{ for } j=2,\ldots,k.$$
	For the first interval $(\gamma, \frac{k+2}{k+1}\gamma]$, since $b-mt<0$ and $\ell_t-t\mu_1$ is a mean-zero random variable,  we have the simple fact that $P_1(\ell_t-t\mu_1\le b-(1-c)t\mu_1)\le \frac{1}{2}$. Then the summation over the first interval is bounded by $\frac{1}{2(k+1)}\gamma$. By applying \eqref{eq.tool} to the remaining $k-1$ intervals with $i=\frac{j+2}{j+1}\gamma$ and $j=\frac{j+1}{j}\gamma$, we obtain
	\begin{align}
	S_2&\le\frac{1}{2(k+1)}\gamma+\sum_{j=2}^k\sum_{t=[\frac{j+2}{j+1}\gamma]}^{[\frac{j+1}{j}\gamma]}\exp(-\frac{(b-mt)^2}{2tA^2}) \notag\\
	&\le \frac{1}{2(k+1)}\gamma+(k-1)\rho^{-1}, \label{eq.s2}
	\end{align}
	for any $k$. Combining \eqref{eq.s1}, \eqref{eq.s2}, \eqref{eq.s3} and \eqref{eq.s4}, we can bound the first term in \eqref{ineq.sep} by $1+\gamma+\min_{k}\{\frac{1}{2(k+1)}\gamma+(k+1)\rho^{-1}\}$. 
	
	Next we bound the second term in \eqref{ineq.sep}. We will use the fact that $\Pr(N(0,\sigma^2)>x)\le \frac{1}{2}\exp(-\frac{x^2}{2\sigma^2})$ for a Gaussian distribution. 
	\begin{align}
	\Pr(Z_b-Z_t\ge \delta_t) &\le \Pr(N(0, \sigma_1^2+\sigma_2^2)\ge \delta_t) \notag\\
	&\le \frac{1}{2}\exp(-\frac{\delta_t^2}{2(\sigma_1^2+\sigma_2^2)})
	\end{align}
 We now consider the sum of these terms over all $t$:
	\begin{align}
	\sum_{t=0}^{\infty}\Pr(Z_b-Z_t\ge \delta_t) &\le \frac{1}{2} \sum_{t=0}^\infty \exp(-\frac{(ct\mu_1)^2}{2(\sigma_1^2+\sigma_2^2)}) \\
	&=\frac{\sqrt{2(\sigma_1^2+\sigma_2^2)}}{2(1-c)\mu_1}\sum_{t=0}^\infty\exp(-t^2)\\
	&\le \frac{3\sqrt{2(\sigma_1^2+\sigma_2^2)}}{4(1-c)\mu_1}.
	\end{align}
	
	We combine these to derive the final bound as desired:
	$E_1[T] \le 1+\frac{b}{(1-c)\mu_1}+\min_{k}\min_c\{\frac{1}{2(k+1)}\frac{b}{(1-c)\mu_1}+(k+1)\rho^{-1}+ \frac{3\sqrt{2(\sigma_1^2+\sigma_2^2)}}{4(1-c)\mu_1}\}.$ The bound on $E_0[T]$ follows by symmetry, with $b$ replaced by $a$, and $\mu_1$ replaced by $\mu_0$.

\end{proof}

\subsection{Proof of error rate}\label{app.errorproof}

\errorrate*
\begin{proof}
The proof of error rates is based on a brute force estimation of the error probabilities: We can write the Type I error $\Pr_0[d=1]$  as a sum of probabilities of the noisy log-likelihood ratio being above the noisy threshold at time $t$ for all $t>0$: $\sum_{t=1}^\infty \Pr_0[\ell_t(A)+Z_t>b+Z_b \; \land \; T=t]$. We then partition the range $[1,\infty)$ into several sub-intervals and analyze them separately. 
We provide the detailed proof for $P_0(d=1)$. The proof is the same for $P_0(d=1)$ with $b$ replaced by $a$, and $\mu_0$ replaced by $\mu_1$. To start, we have the following brute force estimation:
	\begin{align}
	P_0(d=1)&=P_0(\ell(T)+Z_T>b+Z_b) \notag\\
	&=\sum_{t=1}^\infty P_0(T=t, \ell(t)+Z_t>b+Z_b) \notag\\
	&\le \sum_{t=1}^\infty P_0(\ell(t)+Z_t>b+Z_b) \notag\\
	&\le \sum_{t=1}^\infty P_0(\ell(t)+t\mu_0>b+t\mu_0-\delta_t)+\Pr(Z_t-Z_b>\delta_t). \label{eq.split} 
	\end{align}
	We choose $\delta_t=ct\mu_0$, where $c$ is a constant within $(0,1)$. To simply the notation, we let $\gamma$ denote $\frac{b}{(1-c)\mu_0}$, and $m$ denote $(1-c)\mu_0$.
	We bound the first term in \eqref{eq.split} using similar technique as in the proof of Theorem \ref{thm.sample}. We partition $[1,\infty)$ into four sub-intervals:
	$$[0, \frac{1}{2}\gamma], \quad (\frac{1}{2}\gamma, \gamma], \quad (\gamma, 2\gamma], \quad (2\gamma, \infty).$$
	We will use the following observation as the main tool. For any $i$ and $j$ with $i<j$, we have
	\begin{align}
	&\sum_{i}^j \exp(-\frac{(b+mt)^2}{2tA^2}) \notag \\
	\le & \sum_{i}^j \exp(-\frac{b^2}{2jA^2}-\frac{bm}{A^2}-\frac{m^2t}{2A^2}) \notag \\
	=& \exp(-\frac{b^2}{2jA^2}-\frac{bm}{A^2}) \sum_{i}^j \exp(-\frac{m^2t}{2A^2}) \notag \\
	\le & \rho^{-1}\exp(-\frac{b^2}{2jA^2}-\frac{bm}{A^2})\exp(-\frac{m^2i}{2A^2}) ,\label{eq.tool2}
	\end{align}
	where $\rho=1-\exp(-\frac{m^2}{2A^2})$. 
	By applying \eqref{eq.tool2} to the case that $i=1$ and $j=\frac{1}{2}\gamma$, we obtain 
	\begin{align}
	S_1&\le \rho^{-1}\exp(-\frac{2bm}{A^2})\exp(-\frac{m^2}{2A^2}) \notag\\
	&\le \rho^{-1}\exp(-\frac{2bm}{A^2}) \label{eq.S1},
	\end{align}
	where \eqref{eq.S1} follows from the fact that $\exp(-\frac{m^2}{2A^2})<1$. Similarly, we have
	\begin{equation}\label{eq.S4}
	S_4\le \rho^{-1}\exp(-\frac{2bm}{A^2}).
	\end{equation}
	We further partition $(\frac{1}{2}\gamma, \gamma]$ into $k$ sub-intervals $(\frac{k+j-1}{2k}\gamma, \frac{k+j}{2k}\gamma]$ for $j=1,2,\ldots, k$. We have 
	\begin{align}
	S_2&\le \sum_{j=1}^{k} \rho^{-1} \exp(-\frac{bm}{A^2}\frac{k}{k+j}-\frac{bm}{A^2}-\frac{bm}{2A^2}\frac{k+j-1}{2k}) \notag\\
	& = \sum_{j=1}^{k} \rho^{-1}\exp(-\frac{bm}{2A^2}(2+(\frac{2k}{k+j}+\frac{k+j-1}{2k}))) \notag\\
	&\le \sum_{j=1}^{k} \rho^{-1}\exp(-\frac{bm}{2A^2}(2+2-\frac{1}{2k})) \label{eq.opt} \\
	& = k\rho^{-1}\exp(-\frac{2bm}{A^2}\frac{8k-1}{8k}). \label{eq.S2}
	\end{align}
	Similarly, we have 
	\begin{equation}\label{eq.S3}
	S_3 \le k\rho^{-1}\exp(-\frac{2bm}{A^2}\frac{4k+3}{4k+3}).
	\end{equation}
	
	Next we bound the error from the added Gaussian noise.
	\begin{align}
	\sum_{t=1}^{\infty}\Pr(Z_t-Z_b\ge \delta_t) &\le \frac{1}{2} \sum_{t=0}^\infty \exp(-\frac{(ct\mu_0)^2}{2(\sigma_1^2+\sigma_2^2)}) \\
	&=\frac{\sqrt{2(\sigma_1^2+\sigma_2^2)}}{2(1-c)\mu_0}\sum_{t=1}^\infty\exp(-t^2)\\
	&\le \frac{\sqrt{2(\sigma_1^2+\sigma_2^2)}}{4(1-c)\mu_0} \label{eq.noise2}
	\end{align}
	
	Combining \eqref{eq.S1}, \eqref{eq.S2}, \eqref{eq.S3}, \eqref{eq.S4} and \eqref{eq.noise2}, we obtain
	\begin{equation}
	P_0(d=1)\le \min_k \min_c \{ 2\rho^{-1}\exp(-\frac{2b(1-c)\mu_0}{A^2})(1+k\exp(\frac{1}{8k}) \notag+k\exp(\frac{1}{4k+3}))+\frac{\sqrt{2(\sigma_1^2+\sigma_2^2)}}{4(1-c)\mu_0} \}
	\end{equation}
	
\end{proof}

\end{document}


%

%

\onecolumn
\aistatstitle{Instructions for Paper Submissions to AISTATS 2022: \\
Supplementary Materials}

\section{FORMATTING INSTRUCTIONS}

To prepare a supplementary pdf file, we ask the authors to use \texttt{aistats2022.sty} as a style file and to follow the same formatting instructions as in the main paper.
The only difference is that the supplementary material must be in a \emph{single-column} format.
You can use \texttt{supplement.tex} in our starter pack as a starting point, or append the supplementary content to the main paper and split the final PDF into two separate files.

Note that reviewers are under no obligation to examine your supplementary material.

\section{MISSING PROOFS}

The supplementary materials may contain detailed proofs of the results that are missing in the main paper.

\subsection{Proof of Lemma 3}

\textit{In this section, we present the detailed proof of Lemma 3 and then [ ... ]}

\section{ADDITIONAL EXPERIMENTS}

If you have additional experimental results, you may include them in the supplementary materials.

\subsection{The Effect of Regularization Parameter}

\textit{Our algorithm depends on the regularization parameter $\lambda$. Figure 1 below illustrates the effect of this parameter on the performance of our algorithm. As we can see, [ ... ]}

\vfill